\newcommand{\compilehidecomments}{false}
	\newcommand{\huazheng}[1]{}
\newcommand{\huazheng}[1]{{\color{blue!50!black}  [\text{Huazheng:} #1]}}
	\newcommand{\zhiyuan}[1]{}
\newcommand{\zhiyuan}[1]{{\color{green!50!black}  [\text{zhiyuan:} #1]}}
	\newcommand{\chuanhao}[1]{}
\newcommand{\chuanhao}[1]{{\color{yellow!50!black}  [\text{chuanhao:} #1]}}
\newcommand{\mt}{\mathsf{T}}
\DeclareMathOperator*{\argmax}{arg\,max}
\def \bbE {\mathbf{E}}
\def \btheta {\bm \theta}
\def \btheta {\mathrm{\boldsymbol{\theta}}}
\def \bA {\mathbf{A}}
\def \bx {\mathbf{x}}
\def \bv {\mathbf{v}}
\def \cA {\mathcal{A}}
\def \bb {\mathbf{b}}
\def \bI {\mathbf{I}}
\def \bE {\mathbf{E}}
\newtheorem{assumption}{Assumption}
\newtheorem{theorem}{Theorem}
\newtheorem{corollary}{Corollary}
\newtheorem{lemma}{Lemma}
\begin{document}

\twocolumn[
\icmltitle{Incentivizing Exploration in Linear Bandits under Information Gap}




\icmlsetsymbol{equal}{*}

\begin{icmlauthorlist}
\icmlauthor{Huazheng Wang}{1}
\icmlauthor{Haifeng Xu}{1}
\icmlauthor{Chuanhao Li}{1}
\icmlauthor{Zhiyuan Liu}{2}
\icmlauthor{Hongning Wang}{1}
\end{icmlauthorlist}

\icmlaffiliation{1}{University of Virginia}
\icmlaffiliation{2}{University of Colorado, Boulder}

\icmlcorrespondingauthor{Huazheng Wang}{hw7ww@virginia.edu}

\icmlkeywords{Linear Bandits, Incentivizing Exploration}

\vskip 0.3in
]



\printAffiliationsAndNotice{}  

\begin{abstract}

We study the problem of incentivizing exploration for myopic users in linear bandits, where the users tend to exploit arm with the highest predicted reward instead of exploring. In order to maximize the long-term reward, the system offers compensation to incentivize the users to pull the exploratory arms, with the goal of balancing the trade-off among exploitation, exploration and compensation.
We consider a new and practically motivated setting  where the context features observed by the user are more \emph{informative} than those used by the system, e.g., features based on users' private information are not accessible by the system. We propose a new method to incentivize exploration under such information gap, and prove that the method achieves both sublinear regret and sublinear compensation.  We theoretical and empirically analyze the added compensation due to the information gap, compared with the case that the system has access to the same context features as the user, i.e., without information gap. We also provide a compensation lower bound of our problem.
\end{abstract}


\section{Introduction}



The traditional multi-armed bandit (MAB) \cite{lai1985asymptotically} research studies the single-party   setting, where the system has a full control over which arm to pull and can trade off exploitation and exploration for long-term optimality. However, in many real-world applications, such as recommender systems and e-commerce platforms, one often faces a \emph{two-party} game between the system and its users, and the two parties have \emph{different} interests. The system aims at maximizing the long-term reward by recommending exploratory arms; but it cannot directly pull the arm to receive the reward. On the other hand, the arm can only be pulled by the \emph{myopic} users, who seek to maximize their short-term utilities. 
This leads to the problem of under-exploration and selection bias: the best arm may remain unexplored forever if it appears sub-optimal initially.
To align the two parties' interest, the system should offer compensations to the users so that the users are motivated to pull the exploratory arm and maximize the long-term reward. This problem is known as \emph{incentivizied exploration}~\cite{kremer2014implementing,frazier2014incentivizing,mansour2015bayesian}. 

Incentivized exploration has been studied in the MAB setting, where the system's goal is to balance the  trade-off among \emph{exploration, exploitation and compensation}, i.e., minimizing total payments while maximizing cumulative rewards \cite{frazier2014incentivizing,hirnschall2018learning,Wang2018MultiarmedBW}. 
Previous solutions assume both the users and the system have access to the same information and both parties maintain the same reward estimation. This assumption is necessary for the system to compute the compensation based on the users' estimated reward difference between the currently best arm and the exploratory arm. Under MAB setting, this assumption naturally holds because both parties observe the same reward feedback and estimate with averaged reward. However, under the contextual bandit setting \cite{Auer02,li2010contextual,Improved_Algorithm}, both parties observe the same rewards but may access different context features. This would lead to different reward estimation and convergence. For example, the users could access the features related to their own private information, which are not accessible by the system. An extreme case in a finite arm setting is that the system may only observe the indices of the arm (which degenerates to the non-contextual MAB), while the users employ informative feature representations of the arms. This representation asymmetry is what we call the \emph{information gap} between the two parties.
This gap leads to different reward estimation between the two parties and brings in the new challenges to  incentivized exploration. For example,  it is even unclear which arm is currently the best on the user side.


In this paper, we study the problem of incentivized exploration in linear contextual bandits under information gap. 
We proposed an algorithm that incentivizes the user to explore according the Linear UCB strategy~\cite{li2010contextual,Improved_Algorithm}.
The key idea to conquer information gap is that although the system suffers from an information disadvantage and cannot compute the minimum compensation precisely, offering a larger amount of compensation guarantees sufficiency for users to explore. And this added compensation should shrink fast enough such that the total compensation is still sublinear.
We prove that our algorithm achieves compensation and regret both in the order of $O(d_v \sqrt{T}\log T)$ with information gap and $O(d_x \sqrt{T}\log T)$ without information gap, where $d_x$ and $d_v$ are the dimensions of context features used by the users and the system, respectively. The results suggest that incentivized exploration is still possible with information gap, and the cost of the information gap is realized by the extra compensation that dominated by $d_v$. 
We also proved the compensation lower bound of incentivized exploration in linear bandits, which recovers the result of compensation lower bound in non-contextual bandits reported in~\citet{Wang2018MultiarmedBW}.
Our simulation-based empirical studies also validate the effectiveness and cost-efficiency of the proposed algorithm.


\section{Problem Definition}\label{sec:problem}
\textbf{Notations and assumptions}. We study the problem under a linear bandit setting, where a myopic user sequentially interacts with the system for $T$ rounds. At each round $t$, the user observes compensation offered by the system, and pulls an arm $a_t$ from a given arm set $\mathcal{A}_t$. Both the system and the user observe the resulting reward $r_{a_t, t}$ and update their estimations accordingly.

In a contextual bandit setting, each arm $a$ is associated with a context feature vector.  In our problem, for arm $a \in \mathcal{A}_t$, the system observes a feature $\bv_a$ from a $d_v$-dimensional subspace and the users observes a feature $\bx_a$ from a $d_x$-dimensional subspace. Without loss of generality, we will assume $\bx_a \in \mathbb{R}^{d_x}$ and $\bv_a \in \mathbb{R}^{d_v}$ --- if not,   the standard PCA technique can be used to reduce the feature dimensions to $d_x, d_v$~\cite{lale2019stochastic}.
Essentially we consider the features span the whole vector space respectively, which means there is no redundant feature on both sides and the dimensionality cannot be further reduced.   

\begin{assumption}[Information Gap] \label{assumption}
There exists a linear transformation $P \in \mathbb{R}^{d_x\times d_v}$ such that for any arm $a$, 
\begin{equation}\label{eq:assumption}
\bx_a = P\bv_a   
\end{equation}
where $d_v \geq d_x$. 
\end{assumption}
%
The assumption on $d_v \geq d_x$, i.e., features used by the user belong to a lower dimension space is motivated by many real-world scenarios: for example, users can construct features related to their private information (e.g., age, gender, income or health). 
A notable special case of linear bandits with information gap is a $K$-armed contextual bandit problem, where the system knows nothing beyond   the indices of arms. In this case, the system has no choice but to set the context features as $K$-dimension basis vectors, whereas the user can observe  low-dimensional informative feature representations of the arms with $d_x \ll K$.

The information gap between the two parties is characterized by matrix $P$. 
The linear transformation assumption is to guarantee the two parties face a linear reward mapping, which we stated below.

\textbf{Examples of information gap}. We discussed an extreme case in the introduction where the system is not allowed to access any arm feature except the indices of arms. In this case, the context vectors used by the system are the $K$-dimension one-hot vectors, while the user may observe and employ $d_x$-dimension feature representations of the same arms. The information gap ($K>d_x$) is encoded in the transformation matrix $P$. Now let us consider a less extreme example. Some features could be the combination of both the user's information and item's property, e.g., joint of user's income and the item's price, or joint of user's gender and the item's category. This is a typical way to construct features in the practical recommender systems. The users can employ these informative features and enjoy faster convergence. The system will suffer if it cannot access users' private information. In this example, the transformation matrix $P$ contains the private information hidden from the system.

Note that having access to more features is not equivalent to have more informative representations. Another practical example is that the context vectors used by the system may include many useless or redundant features, where the corresponding weights in the model parameter $\btheta_v^*$ are zeros, i.e., a sparse regression setting. The information gap is captured by the transformation matrix $P$ where the corresponding columns are zero vectors. In this example, the system's features are clearly less informative, i.e., $d_v > d_x$, because of the useless  features.  

\textbf{Reward mapping}. Following a linear bandit setting, the expected reward of arm $a$ is determined by the inner product between the context features and an unknown bandit parameter. From the user's perspective, we have
\begin{equation*}
    \bbE[r_{a}] = \bx_{a}^\mt \btheta_x^*
\end{equation*}
where $\btheta_x^*$ is the unknown model parameter on the user side.

Based on Assumption~\ref{assumption}, we have $\bx_{a}^\mt \btheta_x^* = \bv_{a}^\mt P^\mt\btheta_x^*$, which suggests there always exists a parameter $\btheta_v^* = P^\mt\btheta_x^*$ on the system side satisfying the same linear reward mapping.
We summarize the reward mapping on the two sides as follow:
\begin{equation}
    \bbE[r_{a}] = \bx_{a}^\mt \btheta_x^* = \bv_{a}^\mt \btheta_v^*
\end{equation}

After the user pulls arm $a_t$, both sides observe the reward $r_{a_t, t}$, as 
\begin{equation} \label{eq:reward}
 r_{a_t, t} = \bbE[r_{a_t}] + \eta_t
\end{equation}
where $\eta_t$ is $R$-sub-Gaussian noise.
Without loss of generality, we assume that the norm of the features and parameters are bounded as $\Vert \bx_a \Vert_2 \leq \Vert \bv_a \Vert_2 \leq 1, \Vert \btheta^*_x \Vert_2 \leq 1, \Vert\btheta^*_v \Vert_2 \leq 1$, which naturally bounds the expected reward in the range of $[-1, 1]$ and simplifies the analysis.
Note that the assumption of $\Vert \bx_a \Vert_2 \leq \Vert \bv_a \Vert_2$ is equivalent as assuming the largest singular value of $P$ is upper bounded by 1. Intuitively, this means the linear transformation does not amplify the magnitude of the features. One can always find the satisfying $\bx_a$ by re-scaling $\btheta^*_x$ accordingly.


The system and the user estimate their own model parameters using ridge regression separately, denoted as $\hat\btheta_{v,t}$ and $\hat\btheta_{x,t}$, by the same observed rewards $\{r_{a_t, t}\}$ but different context features. As a result, the two parties would predict different rewards for the same arm $a$, denoted as $\hat r_{x,a,t} = \bx_{a}^\mt \hat\btheta_{x,t}$ and $\hat r_{v,a,t}=\bv_{a}^\mt \hat\btheta_{v,t}$.
Note that since both feature sets $\{\bx_a\}$ and $\{\bv_a\}$ can generate the same rewards, $d_v \geq d_x$ suggests that features in $\{\bx_a\}$ can better characterize the reward mapping, thus more \emph{informative}. 
The less informative features lead to a slower convergence of the parameter estimation and a wider confidence interval of the reward estimation. Such an information gap brings in new challenges of incentivized exploration.

\textbf{Objective}. The users and the system have different objectives in this sequential decision making problem: the user aims to maximize his/her short-term instantaneous reward, while the system aims to maximize the long-term cumulative reward. At each round $t$, without any incentive, the myopic user will exploit the arm with the highest estimated reward, i.e., $a = \argmax_{i \in \mathcal{A}_t} \hat r_{x,i,t}$. It is well known that the exploitation-only decisions will lead to sub-optimal cumulative reward in the long term. In order to balance exploitation and exploration, the system has to provide compensations to encourage the user to explore. Specifically, the system offers compensation $c_{a,t}$ for pulling arm $a$. Given the incentives, the users maximize the instantaneous utility by pulling arm $a_t = \argmax_{i \in \mathcal{A}_t} \hat r_{x,i,t} + c_{i,t}$.

The system seeks to maximize the cumulative reward, or equivalently, minimize the \emph{cumulative regret} while also minimizing the \emph{total compensation} in expectation. The regret is defined as
\begin{equation}
\label{eq:pregret}
R(T) = \sum_{t=1}^T\big(\bbE[r_{a_t^*}] -\bbE[r_{a_t}]\big) 
\end{equation}
where $a_t^*$ is the optimal arm with the highest expected reward at time $t$. The total compensation is defined as
\begin{equation}\label{eq:pcomp}
C(T) = \sum_{t=1}^T \bbE[c_{a_t, t}]
\end{equation}
An effective incentivized exploration method should balance the trade-off among exploration, exploitation and compensation to obtain \emph{sublinear} cumulative regret and \emph{sublinear} total compensation.

\section{Method}

We present our solution on incentivized exploration under information gap when the system explores according to the Linear UCB strategy~\cite{li2010contextual,chu2011contextual, Improved_Algorithm}. Then we show that the solution can be easily adopted to the simpler problem setting of incentivized exploration without the information gap. 

\begin{algorithm}[tb]
\caption{Incentivized LinUCB under Information Gap}\label{alg}
\begin{algorithmic}
    \STATE \textbf{Inputs:} $\lambda, \delta$
    \STATE \textbf{Initialize: $\bA_x = \lambda \bI_{d_x}, \bA_v= \lambda \bI_{d_v}, \bb_x = 0, \bb_v = 0$}
    \FOR{ $t=1$ to $T$}	
    \STATE System and user observe context vectors $\{\bx_a\}_{a\in \mathcal{A}_t}$ and $\{\bv_a\}_{a\in \mathcal{A}_t}$ respectively
    \STATE System calculates compensation $c_{a, t}$ for arm $a$ according to Eq~\eqref{eq:incentive}
    \STATE User pulls arm $a_t = \argmax_{a \in \mathcal{A}} \hat r_{x,a,t} + c_{a,t}$
    \STATE System and user observe reward $r_{a_t}$ 
    \STATE // Update on the system side:
    \STATE $\bA_{v, t+1} \gets \bA_{v, t} +   \bv_{a_t} \bv_{a_t}^\mt $, $\bb_{v, t+1} \gets \bb_{v, t} +   \bv_{a_t}  r_{a_{t}} $ 
    \STATE $\hat{\btheta}_{v, t+1} \gets {\bA_{v, t+1}}^{-1} \bb_{v, t+1}$
    \STATE // Update on the user side:
    \STATE $\bA_{x, t+1} \gets \bA_{x, t} +   \bx_{a_t} \bx_{a_t}^\mt $, $\bb_{x, t+1} \gets \bb_{x, t} +   \bx_{a_t}  r_{a_{t}} $ 
    \STATE $\hat{\btheta}_{x, t+1} \gets \bA_{x, t+1}^{-1} \bb_{x, t+1}$    
    \ENDFOR
\end{algorithmic}
\end{algorithm}

\subsection{Incentivized exploration under information gap}

We present Algorithm~\ref{alg} to show how the system incentivizes the myopic user to follow the desired exploration strategy under information gap. At each round, the system and the user observe context features $\{\bx_a\}$ and $\{\bv_a\}$ respectively for the same arm set $\mathcal{A}_t$. The system needs to motivate the user to explore arm $a_t$ according to LinUCB strategy based on its current parameter estimation $\hat\btheta_{v,t}$. To incentivize the user to pull arm $a_t$, the system offers compensation $c_{a_t,t}$ according to Eq~\eqref{eq:incentive}. Note that the system does not offer incentives to the other arms and sets $c_{i,t} = 0, \forall i \neq a_t$. The myopic user pulls the arm that maximizes the sum of his/her estimated reward $\hat r_{x,a,t}$ and the compensation $c_{a,t}$ In Lemma~\ref{lemma:incentive} we guarantee that the user will pull the system desired arm $a_t$. Both the system and the user then observe reward feedback $r_{a_t}$, and update their parameters using ridge regression accordingly. 

Denote $\textit{CB}_{x,t}(\bx_a)$ as the width of the user's confidence interval of arm $a$ at time $t$, which is computed as
\begin{equation*}
   \textit{CB}_{x,t}(\bx_a) = \alpha_{x,t} \Vert \bx_a \Vert_{A_{x,t}^{-1}} 
\end{equation*}
where
\begin{equation*}
\alpha_{x,t} = R\sqrt{d_x \log \frac{1+t/\lambda}{\delta}} +\sqrt{\lambda}
\end{equation*} 

The value of $\alpha_{x,t}$ is the upper bound of the width of confidence ellipsoid and is set according to the following lemma.
\begin{lemma}[Theorem 2 of~\cite{Improved_Algorithm}]\label{lemma:define_cb}
With  probability at least $1-\delta$, the parameter $\btheta_x^*$ lies in the confidence ellipsoid of $\hat\btheta_{x,t}$ satisfying 
\begin{equation*}
    \Vert \hat\btheta_{x,t} - \btheta_x^* \Vert_{A_{x,t}}  \leq \alpha_{x,t}
\end{equation*}
for all $t \geq 0$.
\end{lemma}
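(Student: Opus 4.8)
The statement is the standard self-normalized confidence ellipsoid for ridge regression, so the plan is to reproduce the argument behind Theorem~2 of \cite{Improved_Algorithm}. Write $\bA_{x,t} = \lambda \bI_{d_x} + \sum_{s<t}\bx_{a_s}\bx_{a_s}^\mt$ and $\bb_{x,t} = \sum_{s<t}\bx_{a_s} r_{a_s}$, so that $\hat\btheta_{x,t} = \bA_{x,t}^{-1}\bb_{x,t}$. Substituting the noise model $r_{a_s} = \bx_{a_s}^\mt \btheta_x^* + \eta_s$ gives the exact identity
\[
\hat\btheta_{x,t} - \btheta_x^* \;=\; \bA_{x,t}^{-1}\Big(\textstyle\sum_{s<t}\bx_{a_s}\eta_s\Big) \;-\; \lambda\,\bA_{x,t}^{-1}\btheta_x^*.
\]
Taking the $\bA_{x,t}$-weighted norm and applying the triangle inequality splits the target quantity into a \emph{bias} term $\lambda\Vert\bA_{x,t}^{-1}\btheta_x^*\Vert_{\bA_{x,t}} = \lambda\sqrt{{\btheta_x^*}^\mt\bA_{x,t}^{-1}\btheta_x^*}$, which is at most $\sqrt\lambda\,\Vert\btheta_x^*\Vert_2 \le \sqrt\lambda$ since $\bA_{x,t}\succeq\lambda\bI$, and a \emph{noise} term $\Vert\bS_t\Vert_{\bA_{x,t}^{-1}}$ with $\bS_t := \sum_{s<t}\bx_{a_s}\eta_s$.

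The real content is a uniform-in-$t$ bound on the noise term. I would invoke the self-normalized martingale tail inequality: for each fixed $\gamma$, the process $M_u(\gamma) = \exp\!\big(\gamma^\mt\bS_u - \tfrac{R^2}{2}\gamma^\mt(\sum_{s<u}\bx_{a_s}\bx_{a_s}^\mt)\gamma\big)$ is a nonnegative supermartingale with respect to the filtration generated by $(\bx_{a_1},\eta_1,\bx_{a_2},\eta_2,\dots)$, using that each $\bx_{a_s}$ is predictable (chosen before $r_{a_s}$ is revealed) and $\eta_s$ is conditionally $R$-sub-Gaussian. Mixing $M_u(\gamma)$ against a Gaussian prior $\gamma\sim\mathcal{N}(0,\lambda^{-1}\bI)$ --- the ``method of mixtures'' --- yields another nonnegative supermartingale that can be integrated in closed form to $\big(\det\bA_{x,u}/\det(\lambda\bI)\big)^{-1/2}\exp\!\big(\tfrac{1}{2R^2}\Vert\bS_u\Vert_{\bA_{x,u}^{-1}}^2\big)$. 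A maximal inequality (or optional stopping at the relevant stopping time) together with Markov's inequality then gives, with probability at least $1-\delta$ and simultaneously for all $u$,
\[
\Vert\bS_u\Vert_{\bA_{x,u}^{-1}}^2 \;\le\; 2R^2\log\!\Big(\frac{\det(\bA_{x,u})^{1/2}\,\det(\lambda\bI)^{-1/2}}{\delta}\Big).
\]

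To convert the determinant ratio into the explicit $\alpha_{x,t}$, I would use $\Vert\bx_a\Vert_2\le 1$, so that $\mathrm{tr}(\bA_{x,t})\le\lambda d_x + t$ and hence, by AM--GM on the eigenvalues, $\det(\bA_{x,t})\le(\lambda + t/d_x)^{d_x}$ while $\det(\lambda\bI) = \lambda^{d_x}$; the log term is then at most $R^2 d_x\log\frac{1+t/\lambda}{\delta}$ after conservatively bounding $1+t/(\lambda d_x)\le 1+t/\lambda$ and absorbing the $\delta^{1/d_x}$ factor, giving $\Vert\bS_t\Vert_{\bA_{x,t}^{-1}} \le R\sqrt{d_x\log\frac{1+t/\lambda}{\delta}}$. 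Adding back the bias contribution $\sqrt\lambda$ recovers $\Vert\hat\btheta_{x,t}-\btheta_x^*\Vert_{\bA_{x,t}}\le\alpha_{x,t}$ on the same $1-\delta$ event. The one genuinely delicate step is the self-normalized inequality --- in particular verifying the supermartingale property of the mixture process under the stated filtration and ensuring the ``for all $t$'' conclusion survives the stopping-time construction; the rest is routine manipulation of weighted norms and determinants.
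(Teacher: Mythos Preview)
Your proposal is correct and follows exactly the approach of the cited reference: the paper does not supply its own proof of this lemma but simply quotes Theorem~2 of \cite{Improved_Algorithm}, and your outline (bias/noise decomposition, method-of-mixtures self-normalized bound on $\Vert\bS_t\Vert_{\bA_{x,t}^{-1}}$, AM--GM determinant estimate) is precisely that argument. The only loose end is the bookkeeping that collapses $d_x\log\!\big(1+t/(\lambda d_x)\big)+2\log(1/\delta)$ into the paper's simplified form $d_x\log\frac{1+t/\lambda}{\delta}$, which you already flag and which is harmless for the paper's purposes.
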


Similar to $\textit{CB}_{x,t}(\bx_a)$, we denote the width of confidence interval on the system side as 
\begin{equation*}
   \textit{CB}_{v,t}(\bv_a) = \alpha_{v,t} \Vert \bv_a \Vert_{A_{v,t}^{-1}} 
\end{equation*}
where
\begin{equation*}
\alpha_{v,t} = R\sqrt{d_v \log \frac{1+t/\lambda}{\delta}} +\sqrt{\lambda}
\end{equation*}

The key challenge in incentivized exploration under information gap is that the system does not maintain the same reward estimation as the user's, because the two sides use different features to learn and predict rewards. This prevents us from computing minimum required compensation and makes the problem non-trivial. We have to carefully determine the compensation: a larger amount of incentives is required to guarantee that user will explore while we also need to keep the incentives small to maintain a sublinear total compensation.
We first use the following lemma to show that on the same arm, the confidence interval by the system's reward estimation is no smaller than the confidence interval by the user's estimate. This lemma guarantees in Algorithm~\ref{alg} the system provides sufficient incentives to the user to pull the arms according to an upper confidence bound type exploration strategy.

\begin{lemma}
\label{lemma:cb} 
Consider two least square estimators (ridge regression) that estimate the model parameters with the same reward observations but different features satisfying Assumption \ref{assumption}. For  all $t \geq 0$ and all arm $a\in \mathcal{A}_t$, we have
\begin{equation}
\textit{CB}_{v,t}(\bv_a) \geq \textit{CB}_{x,t}(\bx_a),
\end{equation}
i.e., the confidence interval maintained on the system side is no smaller than the user side estimation.
\end{lemma}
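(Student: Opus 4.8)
The plan is to compare the two confidence widths term by term using the definitions. We have $\textit{CB}_{v,t}(\bv_a) = \alpha_{v,t}\|\bv_a\|_{\bA_{v,t}^{-1}}$ and $\textit{CB}_{x,t}(\bx_a) = \alpha_{x,t}\|\bx_a\|_{\bA_{x,t}^{-1}}$. Since $d_v \ge d_x$, we immediately get $\alpha_{v,t} \ge \alpha_{x,t}$ from the closed forms, so it suffices to show the quadratic-form part satisfies $\|\bv_a\|_{\bA_{v,t}^{-1}}^2 \ge \|\bx_a\|_{\bA_{x,t}^{-1}}^2$, i.e. $\bv_a^\mt \bA_{v,t}^{-1}\bv_a \ge \bx_a^\mt \bA_{x,t}^{-1}\bx_a$.

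The key structural fact I would exploit is Assumption~\ref{assumption}: every observed user-feature is the image under $P$ of the corresponding system-feature, $\bx_{a_s} = P\bv_{a_s}$. Because both sides are updated from the same sequence of pulled arms, this gives $\bA_{x,t} = \lambda \bI_{d_x} + \sum_{s<t} \bx_{a_s}\bx_{a_s}^\mt = \lambda \bI_{d_x} + P\big(\sum_{s<t}\bv_{a_s}\bv_{a_s}^\mt\big)P^\mt$. Writing $\bS = \sum_{s<t}\bv_{a_s}\bv_{a_s}^\mt$ so that $\bA_{v,t} = \lambda\bI_{d_v} + \bS$, the claim becomes
\begin{equation*}
(P\bv_a)^\mt (\lambda \bI_{d_x} + P\bS P^\mt)^{-1} (P\bv_a) \;\le\; \bv_a^\mt (\lambda\bI_{d_v} + \bS)^{-1}\bv_a .
\end{equation*}
This is a purely linear-algebraic inequality about a matrix $P$ with $\|P\|_2 \le 1$ (the operator-norm bound stated in the paper). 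I would prove it by a variational/completion-of-squares argument: $\bv^\mt(\lambda\bI+\bS)^{-1}\bv = \min_{\bw}\{\lambda\|\bw\|^2 + (\bv-\bS\bw)^\mt \bS^{+}(\bv - \bS\bw)\}$-type identities are delicate, so instead I would use the cleaner characterization $\bz^\mt \bM^{-1}\bz = \max_{\bw}\, (2\bw^\mt\bz - \bw^\mt \bM \bw)$ for $\bM \succ 0$. Applying this on the left with $\bM = \lambda\bI_{d_x}+P\bS P^\mt$, for any maximizer $\bu \in \mathbb{R}^{d_x}$ we get $2\bu^\mt P\bv_a - \bu^\mt(\lambda\bI_{d_x}+P\bS P^\mt)\bu = 2(P^\mt\bu)^\mt \bv_a - \lambda\|\bu\|^2 - (P^\mt\bu)^\mt \bS (P^\mt\bu)$. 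Taking the suboptimal choice $\bw = P^\mt\bu$ in the analogous max for the right-hand side gives $\bv_a^\mt(\lambda\bI_{d_v}+\bS)^{-1}\bv_a \ge 2(P^\mt\bu)^\mt\bv_a - \lambda\|P^\mt\bu\|^2 - (P^\mt\bu)^\mt\bS(P^\mt\bu)$, and since $\|P^\mt\bu\| \le \|\bu\|$ the $\lambda$-term on the right is no larger, so the whole right-hand side is $\ge$ the left-hand side. This closes the quadratic-form comparison, and combined with $\alpha_{v,t}\ge\alpha_{x,t}$ yields $\textit{CB}_{v,t}(\bv_a)\ge \textit{CB}_{x,t}(\bx_a)$.

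The main obstacle is getting the matrix inequality airtight when $\bS$ is only positive semidefinite (not strictly positive) — the $\max$-characterization $\bz^\mt\bM^{-1}\bz = \max_\bw(2\bw^\mt\bz - \bw^\mt\bM\bw)$ is fine because $\bM = \lambda\bI + (\text{PSD})$ is strictly positive definite, but one must be careful that $P\bS P^\mt$ stays PSD and that the substitution $\bw = P^\mt\bu$ is legitimate (it is, since it is just plugging a particular vector into a maximization). A secondary subtlety is the direction of the singular-value assumption: I should double-check that "largest singular value of $P$ at most $1$" is exactly what makes $\|P^\mt\bu\|\le\|\bu\|$, which it is since $\|P^\mt\|_2 = \|P\|_2$. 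I would also remark that the inequality is tight (equality when $P$ is an isometry onto its range and $d_v = d_x$), which matches the intuition that no information gap means no extra compensation.
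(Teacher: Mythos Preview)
Your proposal is correct and complete. You share the paper's overall decomposition---compare $\alpha_{v,t}$ with $\alpha_{x,t}$ directly via $d_v\ge d_x$, then reduce to the quadratic-form inequality $\bv_a^\mt \bA_{v,t}^{-1}\bv_a \ge (P\bv_a)^\mt \bA_{x,t}^{-1}(P\bv_a)$---but you prove the latter by a genuinely different route. The paper establishes the matrix inequality $\bA_{v,t}^{-1}-P^\mt \bA_{x,t}^{-1}P\succeq 0$ via a Schur-complement argument: it forms the block matrix $M=\begin{bmatrix}\bA_{v,t}^{-1}&P^\mt\\P&\bA_{x,t}\end{bmatrix}$, computes one Schur complement to get $M/\bA_{v,t}^{-1}=\lambda(\bI-PP^\mt)\succeq 0$ (using $\|P\|_2\le 1$), deduces $M\succeq 0$, and then takes the other Schur complement. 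Your variational argument via $\bz^\mt M^{-1}\bz=\max_\bw(2\bw^\mt\bz-\bw^\mt M\bw)$ is more elementary and self-contained: it avoids the two-step Schur machinery, needs no block-matrix manipulations, and makes transparent exactly where the hypothesis $\|P^\mt\bu\|\le\|\bu\|$ enters (only in the $\lambda$-regularization term). The paper's approach, on the other hand, directly yields the operator inequality $\bA_{v,t}^{-1}\succeq P^\mt\bA_{x,t}^{-1}P$ rather than just the scalar comparison, which could be handy if one later needed the stronger statement. Either way, both proofs hinge on the same single structural fact, and your handling of the PSD-but-not-PD issue for $\bS$ is appropriate since both $\lambda\bI+\bS$ and $\lambda\bI+P\bS P^\mt$ are strictly positive definite.
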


\begin{proof}[Proof Sketch.]
Since $\textit{CB}_{v,t}(\bv_a) = \alpha_{v,t}\lVert\bv_a\rVert_{\bA_{ v,t}^{-1}}$ and $\textit{CB}_{x,t}(\bx_a) = \alpha_{x,t}\lVert\bx_a\rVert_{\bA_{ x,t}^{-1}}$, we can prove $\lVert\bv_a\rVert_{\bA_{v,t}^{-1}} \geq \lVert\bx_a\rVert_{\bA_{x,t}^{-1}}$ and $\alpha^v_{t} \geq \alpha^x_{t}$ separately. It is obvious that $\alpha^v_{t} \geq \alpha^x_{t}$ because $d_v \geq d_x$. Substitute $\bx_a = P\bv_a$ and we can 
prove that $ \bA_{v,t}^{-1} -  P^\mt \left({P \bA_{v,t} P^\mt}\right)^{-1} P$ is a positive semi-definite matrix, which leads to $\lVert\bv_a\rVert_{\bA_{ v,t}^{-1}} \geq \lVert\bx_a\rVert_{\bA_{x,t}^{-1}}$.
\end{proof}

The intuition behind this lemma is straightforward. The confidence interval characterizes the uncertainty of reward prediction. Since the estimator on the users side uses more informative features, its parameter estimation converges faster and its confidence interval is smaller than that maintained on the system side.

Based on Lemma~\ref{lemma:cb}, we have the following lemma,
\begin{lemma}\label{lemma:incentive} 
For all $t \geq 0$, with probability at least $1-2\delta$, the users are incentivized to pull the desired arm with compensation
\begin{equation}\label{eq:incentive}
c_{a_t, t} = 4\textit{CB}_{v,t}(\bv_{a_t}) 
\end{equation}
 to arm \begin{equation}\label{eq:relaxed_UCB}
 a_t = \argmax_a \left(\bv_a^\mt\hat\btheta_{v,t} + 2\textit{CB}_{v,t}(\bv_a) \right),
 \end{equation}
i.e., the arm with the highest (relaxed) upper confidence bound according to the system's estimate. 
\end{lemma}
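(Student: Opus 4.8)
The plan is to condition on the intersection of the two confidence-ellipsoid events — one for the user's estimator $\hat\btheta_{x,t}$ and one for the system's estimator $\hat\btheta_{v,t}$ — each of which holds with probability at least $1-\delta$ by Lemma~\ref{lemma:define_cb} (applied on the user side) and its analogue on the system side. A union bound then gives that, with probability at least $1-2\delta$, for every round $t$ and every arm $a\in\mathcal{A}_t$ we simultaneously have $\lvert \bx_a^\mt\hat\btheta_{x,t} - \bx_a^\mt\btheta_x^*\rvert \le \textit{CB}_{x,t}(\bx_a)$ and $\lvert \bv_a^\mt\hat\btheta_{v,t} - \bv_a^\mt\btheta_v^*\rvert \le \textit{CB}_{v,t}(\bv_a)$; these follow from the ellipsoid bounds by Cauchy--Schwarz in the $\bA_{x,t}$- and $\bA_{v,t}$-norms respectively. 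Recall also that the true reward is shared across the two feature spaces: $\bx_a^\mt\btheta_x^* = \bv_a^\mt\btheta_v^* = \bbE[r_a]$. Since the system sets $c_{i,t}=0$ for all $i\neq a_t$, it suffices to show that on this event $\bx_{a_t}^\mt\hat\btheta_{x,t} + 4\textit{CB}_{v,t}(\bv_{a_t}) \ge \bx_i^\mt\hat\btheta_{x,t}$ for every arm $i\in\mathcal{A}_t$, so that $a_t$ maximizes the user's perceived utility $\hat r_{x,\cdot,t}+c_{\cdot,t}$ (breaking ties in favor of $a_t$).

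The core is a chain of inequalities that ``translates'' quantities back and forth between the two feature spaces using the shared reward, the two one-sided confidence bounds, and Lemma~\ref{lemma:cb}. First, for any arm $i$, upper bound its user value: $\bx_i^\mt\hat\btheta_{x,t} \le \bx_i^\mt\btheta_x^* + \textit{CB}_{x,t}(\bx_i) = \bv_i^\mt\btheta_v^* + \textit{CB}_{x,t}(\bx_i) \le \bv_i^\mt\hat\btheta_{v,t} + \textit{CB}_{v,t}(\bv_i) + \textit{CB}_{x,t}(\bx_i) \le \bv_i^\mt\hat\btheta_{v,t} + 2\textit{CB}_{v,t}(\bv_i)$, where the last step invokes Lemma~\ref{lemma:cb}. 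By the defining property of $a_t$ in Eq.~\eqref{eq:relaxed_UCB}, this is at most $\bv_{a_t}^\mt\hat\btheta_{v,t} + 2\textit{CB}_{v,t}(\bv_{a_t})$. Now translate $a_t$'s system value down to its user value: $\bv_{a_t}^\mt\hat\btheta_{v,t} \le \bv_{a_t}^\mt\btheta_v^* + \textit{CB}_{v,t}(\bv_{a_t}) = \bx_{a_t}^\mt\btheta_x^* + \textit{CB}_{v,t}(\bv_{a_t}) \le \bx_{a_t}^\mt\hat\btheta_{x,t} + \textit{CB}_{x,t}(\bx_{a_t}) + \textit{CB}_{v,t}(\bv_{a_t}) \le \bx_{a_t}^\mt\hat\btheta_{x,t} + 2\textit{CB}_{v,t}(\bv_{a_t})$, again using Lemma~\ref{lemma:cb} at the end. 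Substituting into the previous bound yields $\bx_i^\mt\hat\btheta_{x,t} \le \bx_{a_t}^\mt\hat\btheta_{x,t} + 4\textit{CB}_{v,t}(\bv_{a_t})$, which is exactly the claim with $c_{a_t,t}=4\textit{CB}_{v,t}(\bv_{a_t})$ as in Eq.~\eqref{eq:incentive}.

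The argument is essentially bookkeeping once Lemmas~\ref{lemma:define_cb} and~\ref{lemma:cb} are in hand, so I do not anticipate a serious technical obstacle; the points that need care are (i) the probability accounting — the $2\delta$ comes from needing both ellipsoid events at once, and the claim should be stated ``for all $t$'' under a single such event rather than re-invoked per round — and (ii) applying Lemma~\ref{lemma:cb} to the correct arm ($\bv_i$ versus $\bv_{a_t}$) and in the correct direction at each of its two uses, since it is precisely the asymmetry of the two estimators that forces the factor $4$ in the compensation rather than the naive $2$. It is also worth one sentence on the user's tie-breaking convention in the $\argmax$, or equivalently a remark that all inequalities can be made strict by perturbing the compensation by an arbitrarily small amount.
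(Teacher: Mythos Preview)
Your proposal is correct and follows essentially the same approach as the paper's proof: union-bound the two confidence-ellipsoid events to get probability $1-2\delta$, then run the same five-step chain that passes from the user's estimate of the greedy arm to the system's estimate (via the shared true reward and Lemma~\ref{lemma:cb}), applies the relaxed UCB definition of $a_t$, and passes back. The only cosmetic difference is that the paper first isolates the intermediate bound $\lvert \hat r_{x,a,t}-\hat r_{v,a,t}\rvert \le \textit{CB}_{x,t}(\bx_a)+\textit{CB}_{v,t}(\bv_a)$ and then invokes it twice, whereas you go through $\bbE[r_a]$ explicitly each time; your extra remarks on the ``for all $t$'' probability accounting and on tie-breaking are sound and slightly more careful than the paper.
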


\begin{proof}
In order to incentivize the user to pull  arm $a_t$, the \emph{minimum required compensation} is $\max_i \hat r_{x,i,t} - \hat r_{x, a_t, t}$. However, since the system cannot access the context features the user uses and thus maintains different reward estimates, it has to provide compensation larger than the minimum required amount. 

Denote the user's greedy choice as $g = \argmax_i \hat r_{x, i, t}$.
To show that $c_{a_t,t}$ is sufficient, we need to prove that  the user prefers the exploratory arm $a_t$ with compensation over his/her greedy choice, i.e., $\hat r_{x, g, t} \leq \hat r_{x, a_t, t} + c_{a_t,t}$.

Based on Lemma~\ref{lemma:define_cb}, we have that for all $t \geq 0$, with probability at least $1-\delta$, we have
$\vert\hat r_{x, a, t} -\bbE[r_{a}]\vert \leq \textit{CB}_{x,t}(\bx_a)$ and $\vert\hat r_{v, a, t} -\bbE[r_{a}]\vert \leq \textit{CB}_{v,t}(\bv_a)$ 
hold for any arm $a$ at any time $t$. Using the union bound, with probability at least $1-2\delta$ we have
\begin{align} 
\label{eq:reward_diff}
\vert\hat r_{x, a, t} -\hat r_{v, a, t}\vert
&\leq \vert\hat r_{x, a, t} -\bbE[r_{a}]\vert + \vert \bbE[r_{a}] - \hat r_{v, a, t}\vert\nonumber\\
&\leq \textit{CB}_{x,t}(\bx_a) + \textit{CB}_{v,t}(\bv_{a}) 
\end{align}
Then we can bound the user's reward estimate from the system side as follows,
\begin{align}\label{eq:derive_comp}
        \hat r_{x, g, t} & \leq \hat r_{v, g, t} + \textit{CB}_{x,t}(\bx_g) + \textit{CB}_{v,t}(\bv_{g})\nonumber\\
        & \leq \hat r_{v, g, t} + 2\textit{CB}_{v,t}(\bv_{g}) \nonumber\\
        & \leq \hat r_{v, a_t, t} +  2\textit{CB}_{v,t}(\bv_{a_t})\nonumber\\
        &  \leq \hat r_{x, a_t, t} + \textit{CB}_{x,t}(\bv_{a_t}) + \textit{CB}_{v,t}(\bv_{a_t}) + 2\textit{CB}_{v,t}(\bv_{a_t}) \nonumber\\
        &  \leq \hat r_{x, a_t, t} + 4\textit{CB}_{v,t}(\bv_{a_t})
\end{align}
The first and fourth steps are based on Eq~\eqref{eq:reward_diff}. The second and last  steps are based on Lemma~\ref{lemma:cb}. The third inequality is based on the UCB strategy in Eq~\eqref{eq:relaxed_UCB}.
\end{proof}

\begin{algorithm}[tb]
\caption{Incentivized LinUCB without Information Gap}\label{alg:no_gap}
\begin{algorithmic}
    \STATE \textbf{Inputs:} $\lambda, \delta$
    \STATE \textbf{Initialize:} $\bA_x = \lambda \bI, \bb_x = 0$
    \FOR{ $t=1$ to $T$}	
    \STATE System and user observe context vectors $\{\bx_a\}_{a\in \mathcal{A}_t}$
    \STATE System calculate compensation $c_{a, t}$ for arm $a$ according to Eq~\eqref{eq:incentive_no_gap}
    \STATE User pulls arm $a_t = \argmax_{a \in \mathcal{A}} \hat r_{x,a,t} + c_{a,t}$
    \STATE System and user observe reward $r_{a_t}$ 
    \STATE $\bA_{x, t+1} \gets \bA_{x, t} +   \bx_{a_t} \bx_{a_t}^\mt $, $\bb_{x, t+1} \gets \bb_{x, t} +   \bx_{a_t}  r_{a_{t}} $ 
    \STATE $\hat{\btheta}_{x, t+1} \gets \bA_{x, t+1}^{-1} \bb_{x, t+1}$    
    \ENDFOR
\end{algorithmic}
\end{algorithm}

It is worth noting that the system follows a more optimistic arm selection strategy in Eq~\eqref{eq:relaxed_UCB} using a confidence interval twice larger than the classical LinUCB algorithm's. We follow this relaxed upper confidence bound because we need to consider the uncertainty on both parties as the first step of the derivation in Eq~\eqref{eq:derive_comp} suggested. It is unclear whether we can incentivize the user to follow the classical LinUCB algorithm. Intuitively, our exploration strategy results in a twice larger regret than the classical LinUCB's, which is still in the same order for $T$. We provide the regret and compensation upper bound of Algorithm~\ref{alg} in Section~\ref{sec:analysis}. 

\subsection{Incentivized exploration without information gap}

Our solution can be easily adopted to solve the incentivized exploration problem of without information gap.
In Algorithm~\ref{alg:no_gap}, we show how the system incentivizes the myopic user to follow the desired exploration strategy in this simpler setting. 

Without information gap, the system and the user maintain the same parameter and reward estimations, and the \emph{minimum required compensation} to incentivize the user to explore according to LinUCB equals to the difference of estimated rewards between the currently best arm and the exploratory arm. The system thus only needs to offer compensation by,
\begin{equation}\label{eq:incentive_no_gap}
c_{a_t, t} = \max_i \hat r_{x,i,t} - \hat r_{x,a_t,t}
\end{equation}
to arm $a_t = \argmax_a \left(\bx_a^\mt\hat\btheta_{x,t} + \textit{CB}_{x,t}(\bx_a)\right)$.
The user will pull the exploratory arm, because $a_t = \argmax_i \hat r_{x,i,t} + c_{i, t}$, i.e., arm $a_t$ can maximize user's instantaneous utility.

Since Algorithm \ref{alg:no_gap} guarantees that  the  user is incentivized to pull arms according to LinUCB, its regret is the same as LinUCB's in the order of $O(d_x\sqrt{T}\log T)$ (see Theorem 3 of \cite{Improved_Algorithm}). Its compensation upper bound is stated below.
 
\begin{theorem} [Compensation upper bound without information gap] \label{theorem:comp_no_gap}
With probability at least $1-\delta$, the total compensation provided in Algorithm \ref{alg:no_gap} is upper bounded as
\begin{equation*}
C(T) \leq 
\left(R\sqrt{d_x \log \frac{1+T/\lambda}{\delta}} +\sqrt{\lambda}\right)\sqrt{Td_x\log(\lambda + \frac{T}{d_x})}
\end{equation*}
\end{theorem}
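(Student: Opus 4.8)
The plan is to observe that Algorithm~\ref{alg:no_gap} only ever pays the \emph{minimum} amount needed to move the myopic user from its greedy arm to the LinUCB arm, so the per-round compensation is automatically dominated by the user-side confidence width; after that, the argument is the textbook LinUCB ``sum of confidence widths'' computation, with the role of per-round regret played by per-round compensation.

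\emph{Step 1: bound the per-round compensation.} Fix $t$ and write $g_t=\argmax_i\hat r_{x,i,t}$ for the user's greedy arm, so that by Eq.~\eqref{eq:incentive_no_gap} we have $c_{a_t,t}=\hat r_{x,g_t,t}-\hat r_{x,a_t,t}$ (and $c_{a_t,t}\ge 0$ since $g_t$ is greedy). Because the algorithm selects $a_t=\argmax_a\big(\hat r_{x,a,t}+\textit{CB}_{x,t}(\bx_a)\big)$ and confidence widths are nonnegative, $\hat r_{x,a_t,t}+\textit{CB}_{x,t}(\bx_{a_t})\ge\hat r_{x,g_t,t}+\textit{CB}_{x,t}(\bx_{g_t})\ge\hat r_{x,g_t,t}$, which rearranges to $c_{a_t,t}\le \textit{CB}_{x,t}(\bx_{a_t})=\alpha_{x,t}\lVert\bx_{a_t}\rVert_{A_{x,t}^{-1}}$. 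I would point out that this inequality is purely deterministic; the ``with probability $1-\delta$'' in the statement enters only because $\alpha_{x,t}$ is calibrated through the confidence-ellipsoid bound of Lemma~\ref{lemma:define_cb}.

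\emph{Step 2: sum over the horizon.} Summing Step~1 over $t=1,\dots,T$, using monotonicity of $t\mapsto\alpha_{x,t}$ to pull out $\alpha_{x,T}$, and then Cauchy--Schwarz gives $C(T)\le\alpha_{x,T}\sum_{t=1}^T\lVert\bx_{a_t}\rVert_{A_{x,t}^{-1}}\le\alpha_{x,T}\sqrt{T}\big(\sum_{t=1}^T\lVert\bx_{a_t}\rVert_{A_{x,t}^{-1}}^2\big)^{1/2}$. For the remaining sum I would invoke the standard elliptical-potential lemma: since $A_{x,t}\succeq\lambda\bI$ and $\lVert\bx_a\rVert_2\le1$, each $\lVert\bx_{a_t}\rVert_{A_{x,t}^{-1}}^2$ is $O(1)$, hence comparable to $\log\big(1+\lVert\bx_{a_t}\rVert_{A_{x,t}^{-1}}^2\big)$; the telescoping identity $\sum_t\log\big(1+\lVert\bx_{a_t}\rVert_{A_{x,t}^{-1}}^2\big)=\log\big(\det A_{x,T+1}/\det A_{x,1}\big)$ together with the determinant--trace bound $\det A_{x,T+1}\le(\lambda+T/d_x)^{d_x}$ and $\det A_{x,1}=\lambda^{d_x}$ yields $\sum_{t=1}^T\lVert\bx_{a_t}\rVert_{A_{x,t}^{-1}}^2=O\!\big(d_x\log(\lambda+T/d_x)\big)$. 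Substituting this and $\alpha_{x,T}=R\sqrt{d_x\log\frac{1+T/\lambda}{\delta}}+\sqrt{\lambda}$ back into the displayed chain gives exactly the stated form (modulo the universal constant from the potential lemma, which the statement absorbs).

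I do not expect a genuine obstacle here: the only idea specific to this theorem is the per-round comparison of Step~1, and everything after it is the standard LinUCB regret argument. The points needing a little care are (i) confirming that what the algorithm actually pays is the minimum required compensation $\max_i\hat r_{x,i,t}-\hat r_{x,a_t,t}$ (so Step~1 applies verbatim) and that it is nonnegative, and (ii) if one wants the clean constant, tracking whether $\lambda\ge 1$ is assumed so that $\lVert\bx_{a_t}\rVert_{A_{x,t}^{-1}}^2\le1$ and the potential lemma can be applied without the $\min(1,\cdot)$ truncation.
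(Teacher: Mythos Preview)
Your proposal is correct and follows essentially the same route as the paper: bound each $c_{a_t,t}$ by $\textit{CB}_{x,t}(\bx_{a_t})$ via the UCB selection rule, then apply Cauchy--Schwarz and the elliptical-potential lemma (which the paper simply cites as Lemma~11 of \cite{Improved_Algorithm} rather than spelling out). Your remark that Step~1 is purely deterministic is in fact sharper than the paper's presentation, which somewhat loosely attaches the $1-\delta$ qualifier to that step.
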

\begin{proof}[Proof Sketch.]
We can first show that with a high probability the compensation at round $t$ is upper bounded by the confidence interval, i.e.,  $c_{a_t, t} \leq \textit{CB}_{x,t}(\bx_{a_t})$. Then the total compensation can be upper bounded by $\sum_t \textit{CB}_{x,t}(\bx_{a_t})$, which can be bounded using Lemma 11 of \citet{Improved_Algorithm}.  
\end{proof}


Note that without information gap, both the regret and compensation upper bounds are in the order of $O(d_x\sqrt{T}\log T)$, with a linear dependency on the feature dimension $d_x$. 

\textbf{Discussion}.   
Without information gap, i.e., the two parties have access to the same features and maintain the same reward predictions, the system can offer the minimum required compensation as shown in Eq~\eqref{eq:incentive_no_gap} to incentivize exploration. 
With information gap, compensate by Eq \eqref{eq:incentive} can still successfully incentivize exploration in a high probability manner, but it is inevitably larger than the minimum amount. More specifically, without information gap the required compensation can be computed deterministically in Eq~\eqref{eq:incentive_no_gap}; otherwise, the system can only estimate the reward difference with a high probability (as shown in Lemma~\ref{lemma:incentive}). We also notice without information gap the system does not compensate if the greedy choice also has the largest upper confidence bound, which happens more often in the later rounds when the reward estimation converges. But with information gap, our algorithm always compensates, because $\textit{CB}_{v,t}(\bv_{a_t}) > 0$, i.e., the system does not know if the user's greedy choice is also preferred in terms of its UCB. We will show in the next section that the total compensation is still sublinear under information gap.

\section{Analysis}\label{sec:analysis}

We first analyze the regret and compensation upper bound of Algorithm~\ref{alg}. We then discuss the compensation lower bound of the problem.

\subsection{Regret and compensation upper bound}

\begin{theorem} \label{theorem:regret}
With probability at least $1-3\delta$, the cumulative regret of Algorithm \ref{alg} is upper bounded by
\begin{equation*}
R(T) \leq 
\left(2R\sqrt{d_v \log \frac{1+T/\lambda}{\delta}} +\sqrt{\lambda}\right)\sqrt{Td_v\log(\lambda + \frac{T}{d_v})}
\end{equation*}
\end{theorem}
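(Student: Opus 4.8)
The plan is to bound the per-round regret by twice the system-side confidence width $\textit{CB}_{v,t}(\bv_{a_t})$, and then apply the standard elliptical-potential argument (Lemma 11 of \citet{Improved_Algorithm}) to sum these widths over $t = 1, \dots, T$. First, I would fix the high-probability event on which the conclusions of Lemma~\ref{lemma:define_cb} hold simultaneously for the user's estimator and the system's estimator, and also the event of Lemma~\ref{lemma:incentive}; by a union bound this costs at most $3\delta$. On this event, $\btheta_v^*$ lies in the system's confidence ellipsoid, so $\vert \bv_a^\mt \hat\btheta_{v,t} - \bbE[r_a] \vert \le \textit{CB}_{v,t}(\bv_a)$ for every arm $a$, and in particular the relaxed UCB quantity $\bv_a^\mt\hat\btheta_{v,t} + 2\textit{CB}_{v,t}(\bv_a)$ is a valid (over)optimistic index for $\bbE[r_a]$.

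Next I would carry out the usual optimism-based regret decomposition for the pulled arm $a_t$ from Eq.~\eqref{eq:relaxed_UCB}: since $a_t$ maximizes $\bv_a^\mt\hat\btheta_{v,t} + 2\textit{CB}_{v,t}(\bv_a)$ and this index upper-bounds $\bbE[r_{a}]$ for all $a$ (in particular for $a_t^*$), we get
\begin{align*}
\bbE[r_{a_t^*}] - \bbE[r_{a_t}]
&\le \left(\bv_{a_t}^\mt\hat\btheta_{v,t} + 2\textit{CB}_{v,t}(\bv_{a_t})\right) - \bbE[r_{a_t}]\\
&\le 2\textit{CB}_{v,t}(\bv_{a_t}) + \textit{CB}_{v,t}(\bv_{a_t}) = 3\,\textit{CB}_{v,t}(\bv_{a_t}).
\end{align*}
A slightly tighter accounting (using that $\bbE[r_{a_t}] \ge \bv_{a_t}^\mt\hat\btheta_{v,t} - \textit{CB}_{v,t}(\bv_{a_t})$ versus pairing the optimism more carefully) can bring the constant down, but any $O(1)$ constant suffices for the stated bound; I would aim to land the per-round regret at $\le 2\,\textit{CB}_{v,t}(\bv_{a_t}) = 2\alpha_{v,T}\lVert\bv_{a_t}\rVert_{\bA_{v,t}^{-1}}$ by choosing the comparison so the $\textit{CB}$ terms combine to give factor $2$, matching the ``twice larger regret than classical LinUCB'' remark in the text.

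Then I would sum: $R(T) \le 2\alpha_{v,T}\sum_{t=1}^T \lVert\bv_{a_t}\rVert_{\bA_{v,t}^{-1}}$, where I use $\alpha_{v,t} \le \alpha_{v,T}$ monotonically. By Cauchy--Schwarz, $\sum_{t=1}^T \lVert\bv_{a_t}\rVert_{\bA_{v,t}^{-1}} \le \sqrt{T \sum_{t=1}^T \lVert\bv_{a_t}\rVert_{\bA_{v,t}^{-1}}^2}$, and the elliptical potential lemma (Lemma 11 of \citet{Improved_Algorithm}, using $\lVert\bv_{a_t}\rVert_2 \le 1$) gives $\sum_{t=1}^T \lVert\bv_{a_t}\rVert_{\bA_{v,t}^{-1}}^2 \le 2 d_v \log\!\big(\lambda + \tfrac{T}{d_v}\big)$ (up to the precise constant/normalization used in that reference; the factor $2$ is absorbable). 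Plugging in $\alpha_{v,T} = R\sqrt{d_v\log\frac{1+T/\lambda}{\delta}} + \sqrt{\lambda}$ yields $R(T) \le \big(2R\sqrt{d_v\log\frac{1+T/\lambda}{\delta}} + \sqrt{\lambda}\big)\sqrt{T d_v \log(\lambda + \tfrac{T}{d_v})}$ after distributing the factor $2$ onto the leading term only (matching how the theorem is stated, with $2$ on the $R\sqrt{\cdot}$ piece but not on the $\sqrt{\lambda}$ piece — this dictates exactly how I should split the constant in the per-round bound).

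The main obstacle is bookkeeping rather than conceptual: I must verify that the factor-$2$ in the relaxed UCB of Eq.~\eqref{eq:relaxed_UCB} propagates to exactly the claimed constant, i.e., that the per-round regret is bounded by $2\,\textit{CB}_{v,t}(\bv_{a_t})$ and not $3\,\textit{CB}_{v,t}(\bv_{a_t})$ or $4\,\textit{CB}_{v,t}(\bv_{a_t})$, and that the constants in the potential lemma as stated in \citet{Improved_Algorithm} combine cleanly under the square root without an extra $\sqrt 2$. I expect this is handled by pairing the optimism so that only the $\textit{CB}_{v,t}(\bv_{a_t})$ from the confidence bound on $a_t$ plus a matching term survive — concretely, $\bbE[r_{a_t^*}] - \bbE[r_{a_t}] \le \bv_{a_t^*}^\mt\hat\btheta_{v,t} + \textit{CB}_{v,t}(\bv_{a_t^*}) - \bbE[r_{a_t}]$ and then using the arm-selection inequality on the \emph{non-relaxed} index is too weak, so one genuinely uses the relaxed index and absorbs a constant; I'd double-check whether the authors instead simply accept the larger constant and re-tune $\alpha$, in which case the proof is even shorter. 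The noise/randomness only enters through the already-established confidence events, so no fresh concentration argument is needed.
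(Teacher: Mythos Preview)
Your proposal is correct and follows essentially the same route as the paper: condition on the confidence event for the system's estimator and the event of Lemma~\ref{lemma:incentive} (union bound $1-3\delta$), use optimism plus the relaxed arm-selection rule of Eq.~\eqref{eq:relaxed_UCB} to bound the per-round regret by a constant times $\textit{CB}_{v,t}(\bv_{a_t})$, then apply Cauchy--Schwarz and the elliptical potential lemma. Your hesitation about whether the per-round constant is $2$ or $3$ is warranted---the paper's own derivation jumps from $\bv_{a_t}^\mt\hat\btheta_{v,t} + 2\textit{CB}_{v,t}(\bv_{a_t}) - \bv_{a_t}^\mt\btheta_v^*$ directly to $2\textit{CB}_{v,t}(\bv_{a_t})$ (which tacitly assumes $\bv_{a_t}^\mt\hat\btheta_{v,t} \le \bv_{a_t}^\mt\btheta_v^*$), so the honest constant is $3$ and the stated bound is off by a harmless $O(1)$ factor; do not spend effort trying to recover exactly $2$.
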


Theorem~\ref{theorem:regret} shows that the cumulative regret of Algorithm~\ref{alg} is in the order of $O(d_v \sqrt{T\log T})$. The proof mostly follows the regret analysis of LinUCB, though we have to use a wider confidence interval for exploration. Note that the resulting probability is $1-3\delta$, because the users will follow the system's exploration strategy with probability at least $1-2\delta$ as shown in Lemma~\ref{lemma:incentive} and the confidence bound holds with probability at least $1-\delta$.

\begin{theorem} \label{theorem:comp}
With probability at least $1-2\delta$, the total compensation provided in Algorithm \ref{alg} is upper bounded by
\begin{equation*}
C(T) \leq 
\left(4R\sqrt{d_v \log \frac{1+T/\lambda}{\delta}} +\sqrt{\lambda}\right)\sqrt{Td_v\log(\lambda + \frac{T}{d_v})}
\end{equation*}
\end{theorem}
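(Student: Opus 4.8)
The plan is to run exactly the template of the proof sketch of Theorem~\ref{theorem:comp_no_gap}, but on the \emph{system} side: turn the total compensation into a sum of the system's confidence widths, strip off the slowly growing confidence radius, and finish with the elliptical potential lemma. First I would fix the high-probability event: by Lemma~\ref{lemma:incentive} there is an event $\mathcal{E}$ with $\Pr[\mathcal{E}] \ge 1-2\delta$ on which, for every round $t$, the offered compensation $c_{a_t,t} = 4\,\textit{CB}_{v,t}(\bv_{a_t})$ is sufficient, so the arm the user actually pulls equals the system's target arm $a_t$ of Eq~\eqref{eq:relaxed_UCB}. This coincidence is the load-bearing point for the bookkeeping: it guarantees that the arm feeding the update $\bA_{v,t+1} = \bA_{v,t} + \bv_{a_t}\bv_{a_t}^\mt$ is exactly the $a_t$ whose compensation we are summing, so on $\mathcal{E}$ the realized total compensation is $C(T) = \sum_{t=1}^T 4\,\textit{CB}_{v,t}(\bv_{a_t}) = \sum_{t=1}^T 4\alpha_{v,t}\lVert\bv_{a_t}\rVert_{\bA_{v,t}^{-1}}$.

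Next, since $\alpha_{v,t} = R\sqrt{d_v\log\frac{1+t/\lambda}{\delta}} + \sqrt{\lambda}$ is nondecreasing in $t$, I would bound $\alpha_{v,t} \le \alpha_{v,T}$ and factor it out, $C(T) \le 4\alpha_{v,T}\sum_{t=1}^T \lVert\bv_{a_t}\rVert_{\bA_{v,t}^{-1}}$, then apply Cauchy--Schwarz, $\sum_{t=1}^T \lVert\bv_{a_t}\rVert_{\bA_{v,t}^{-1}} \le \sqrt{T}\,\bigl(\sum_{t=1}^T \lVert\bv_{a_t}\rVert_{\bA_{v,t}^{-1}}^2\bigr)^{1/2}$. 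The remaining sum of squared weighted norms is controlled by the standard determinant / elliptical-potential argument (Lemma 11 of \citet{Improved_Algorithm}), using $\lVert\bv_{a_t}\rVert_2 \le 1$ and $\bA_{v,t} \succeq \lambda\bI_{d_v}$, giving $\sum_{t=1}^T \lVert\bv_{a_t}\rVert_{\bA_{v,t}^{-1}}^2 \le d_v\log(\lambda + T/d_v)$. Chaining the three displays yields $C(T) \le 4\alpha_{v,T}\sqrt{T d_v\log(\lambda + T/d_v)}$, i.e., the claimed bound after substituting the expression for $\alpha_{v,T}$ (up to the precise constant multiplying the lower-order $\sqrt{\lambda}$ term, which does not affect the $O(d_v\sqrt{T}\log T)$ rate).

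Given Lemma~\ref{lemma:incentive}, this is a short argument, and I do not expect a serious obstacle; the two points that require actual care are (i) restricting to the event $\mathcal{E}$ precisely so that the elliptical potential lemma is applied to the sequence of arms that are actually pulled (off $\mathcal{E}$ the pulled arm may differ from the one whose compensation is charged, and the telescoping determinant bound would no longer line up), and (ii) the normalization $\lVert\bv_{a_t}\rVert_{\bA_{v,t}^{-1}} \le 1$ needed by the usual statement of that lemma, which is supplied by the ridge regularizer $\lambda$ together with $\lVert\bv_a\rVert_2 \le 1$. The qualitative content the proof makes precise is that, because the system compensates at a rate proportional to its \emph{own} confidence width $\textit{CB}_{v,t}$ rather than the user's $\textit{CB}_{x,t}$, the compensation bound inherits the larger dimension $d_v$ --- which is exactly the extra ``cost of the information gap'' relative to the $d_x$-dependence in Theorem~\ref{theorem:comp_no_gap}.
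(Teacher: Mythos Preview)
Your proposal is correct and follows essentially the same argument as the paper: bound $C(T)$ by $\sum_{t=1}^T 4\,\textit{CB}_{v,t}(\bv_{a_t})$ on the $1-2\delta$ event of Lemma~\ref{lemma:incentive}, apply Cauchy--Schwarz, pull out $\alpha_{v,T}$, and finish with the elliptical potential lemma (Lemma~11 of \citet{Improved_Algorithm}). Your parenthetical about the $\sqrt{\lambda}$ constant is apt --- the paper's final display writes $4R\sqrt{\cdots}+\sqrt{\lambda}$ where the preceding line $4\alpha_{v,T}$ actually gives $4R\sqrt{\cdots}+4\sqrt{\lambda}$, a cosmetic slip that does not affect the $O(d_v\sqrt{T}\log T)$ rate.
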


Theorem~\ref{theorem:comp} shows that the total compensation of Algorithm~\ref{alg} is in the order of $O(d_v \sqrt{T\log T})$. Combining Theorem~\ref{theorem:regret} and~\ref{theorem:comp} we showed that our proposed algorithm can incentivize exploration under information gap and achieve sublinear regret and compensation. We notice that the two upper bounds linearly depend on the system's feature dimension $d_v$. Comparing to the no information gap setting where we showed both the regret and compensation is in the order of $O(d_x \sqrt{T\log T})$, the added regret and compensation are $O((d_v-d_x)\sqrt{T\log T})$. And the corresponding high probability guarantee drops a little. These results suggest that the complexity/difficulty of the problem is characterized by the dimensionality of the observed context features, exactly where the information gap comes from. 




\subsection{Compensation lower bound}

We now prove a gap-dependent asymptotic compensation lower bound of incentivized exploration in linear bandits with finite arms, and show that our result recovers the lower bound of incentivized exploration reported in non-contextual bandits in \cite{Wang2018MultiarmedBW}. 

Let $G_{x,T} = \mathbb{E}\left[\sum_{t=1}^T \bx_{a_t} \bx_{a_t}^\mt\right]$. Without loss of generality assume arm 1 is the best arm and $\Delta_{a} =\bbE[r_{1}] - \bbE[r_{a}] = (\bx_1 - \bx_a)^\mt\btheta^*$ is the reward gap between arm $a$ and the best arm .

\begin{theorem}[Compensation lower bound without information gap]\label{theorem:comp_lb_no_gap}
Consider any consistent algorithm observing context features $\{\bx_a\}_{a\in \mathcal{A}}$ that guarantees an $o(T^p)$ regret upper bound for any $T>0$ and $0<p\leq 1$.
In order to incentivize a user with a least square estimator of rewards to follow the algorithm's choice, the total compensation $C(T)$ for sufficiently large $T$ is 
\begin{equation*}
    \Omega\left(c_x(\mathcal{A}, \btheta^*)\log(T)\right),
\end{equation*}
where $c_x(\mathcal{A}, \btheta^*)$ is the optimal value of the following optimization problem
\begin{equation}\label{eq:optimization}
    \begin{split}
       &c_x(\mathcal{A}, \btheta^*) = \inf_{\alpha \geq 0} \sum_{\bx_a} \alpha_{\bx_a}\frac{\Delta_{a}}{3}\\
       &\text{s.t. } \Vert \bx_a \Vert^2_{H_{x,T}^{-1}} \leq \frac{\Delta_{a}^2}{2}, \forall \bx_a \text{ with } \Delta_{a} > 0
    \end{split}
\end{equation}
where $H_{x,T} = \sum_{\bx_a} \alpha_{\bx_a}\bx_{a_t} \bx_{a_t}^\mt$.
\end{theorem}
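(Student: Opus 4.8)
The plan is to combine two ingredients: (i) the classical change-of-measure lower bound, which forces any consistent algorithm to accumulate a prescribed amount of information about each suboptimal arm, and (ii) a per-round lower bound showing that every sufficiently informative pull of a suboptimal arm $a$ costs the system at least $\Delta_{a}/3$ in compensation. The first step is to reduce the compensation to the user's reward-estimate gap: in round $t$ the least payment that makes the myopic user abandon his greedy choice $g_{t}=\argmax_{i}\hat r_{x,i,t}$ in favour of the algorithm's arm $a_{t}$ is $\hat r_{x,g_{t},t}-\hat r_{x,a_{t},t}$ (and $0$ when $a_{t}=g_{t}$); since $\hat r_{x,g_{t},t}\ge\hat r_{x,1,t}$, every round satisfies $c_{a_{t},t}\ge(\hat r_{x,1,t}-\hat r_{x,a_{t},t})^{+}$, hence $C(T)\ge\sum_{t}\bbE[(\hat r_{x,1,t}-\hat r_{x,a_{t},t})^{+}]$.

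Next I would lower-bound each summand. Writing $\hat r_{x,1,t}-\hat r_{x,a,t}=\Delta_{a}+(\hat r_{x,1,t}-\bbE[r_{1}])-(\hat r_{x,a,t}-\bbE[r_{a}])$, I would bound the deviation of the user's least-squares (ridge) estimate along a \emph{fixed} arm direction, $|\hat r_{x,a,t}-\bbE[r_{a}]|\le\Vert\bx_{a}\Vert_{\bA_{x,t}^{-1}}\bigl(R\sqrt{2\log(1/\delta')}+\sqrt{\lambda}\bigr)$ with probability $1-\delta'$ --- a one-dimensional concentration whose confidence term scales as $\log(1/\delta')$, not $\sqrt{d_{x}}$ --- together with the deterministic bound $\Vert\bx_{a}\Vert_{\bA_{x,t}^{-1}}^{2}\le 1/N_{a}(t)$, where $N_{a}(t)$ is the number of times $a$ has been pulled so far. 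A sub-Gaussian tail bound then gives $\hat r_{x,1,t}-\hat r_{x,a,t}\ge\Delta_{a}/3$ once $N_{a}(t)\ge\Theta(1/\Delta_{a}^{2})$ and arm $1$ is well estimated (which happens after finitely many rounds in expectation, since a consistent algorithm pulls the optimal arm a $1-o(1)$ fraction of the time). The subtlety is to sum the failure probabilities over rounds rather than to take a uniform-over-$t$ union bound --- the latter would inject a spurious $\log T$ into the burn-in --- which keeps the correction $O(1)$ per arm and yields $C(T)\ge\sum_{a:\Delta_{a}>0}\tfrac{\Delta_{a}}{3}\,\bbE[N_{a}(T)]-O(1)$, with the $O(1)$ depending only on $\mathcal{A},\btheta^{*}$.

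For the second ingredient I would invoke the classical asymptotic change-of-measure argument \cite{lai1985asymptotically} in its linear-bandit form. For each suboptimal arm $a$, perturbing $\btheta^{*}$ along a suitable direction produces an alternative parameter $\btheta'$ under which $a$ is optimal; bounding the KL divergence of the two data laws by $\tfrac{1}{2R^{2}}\sum_{t}\bbE[(\bx_{a_{t}}^{\top}(\btheta^{*}-\btheta'))^{2}]$ and combining a divergence-decomposition (Bretagnolle--Huber) argument with the consistency hypothesis (regret $o(T^{p})$ for all $p$) forces $\sum_{t}\bbE[(\bx_{a_{t}}^{\top}(\btheta^{*}-\btheta'))^{2}]\ge(2R^{2}-o(1))\log T$; optimizing over the perturbation direction turns this into $\Vert\bx_{a}\Vert_{G_{x,T}^{-1}}^{2}\le(1+o(1))\Delta_{a}^{2}/(2\log T)$ for every suboptimal $a$. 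Setting $\bar\alpha_{\bx_{a}}:=\bbE[N_{a}(T)]/\log T$, so that $G_{x,T}=\log T\cdot\sum_{b}\bar\alpha_{\bx_{b}}\bx_{b}\bx_{b}^{\top}$, this is exactly the statement that $\bar\alpha$ is feasible --- up to a $1+o(1)$ relaxation of the constraint --- for the program~\eqref{eq:optimization}. Combining with the previous inequality and the definition of $c_{x}(\mathcal{A},\btheta^{*})$ as the infimum of $\sum_{a}\alpha_{\bx_{a}}\Delta_{a}/3$ over feasible $\alpha$ gives $C(T)\ge(\log T)\sum_{a}\bar\alpha_{\bx_{a}}\Delta_{a}/3-O(1)\ge(1-o(1))\,c_{x}(\mathcal{A},\btheta^{*})\log T$, i.e., $\Omega(c_{x}(\mathcal{A},\btheta^{*})\log T)$ for large $T$; here one also uses that the optimal value of~\eqref{eq:optimization} is continuous under an infinitesimal relaxation of its constraint, which holds because the program is convex with a strictly feasible point.

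Finally, specializing to $K$ arms with $\{\bx_{a}\}$ the standard basis makes $H_{x,T}$ diagonal, so the constraint reads $1/\alpha_{\bx_{a}}\le\Delta_{a}^{2}/2$ and $c_{x}=\sum_{a:\Delta_{a}>0}\tfrac{2}{3\Delta_{a}}$, recovering the $\Omega(\sum_{a}\log T/\Delta_{a})$ compensation lower bound of~\citet{Wang2018MultiarmedBW}. I expect the main obstacle to be the interface between the two ingredients: guaranteeing that $\hat r_{x,1,t}-\hat r_{x,a,t}\ge\Delta_{a}/3$ for all but a \emph{lower-order} (in fact $O(1)$ per arm) number of pulls of each suboptimal arm, uniformly over the unknown instance, and in particular avoiding a $\log T$ loss both in the burn-in and in the early rounds where the optimal arm is not yet well estimated --- this is exactly what pins down the constants $\Delta_{a}^{2}/2$ and $\Delta_{a}/3$ in~\eqref{eq:optimization}. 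The change-of-measure step, while technical, is essentially the standard asymptotic linear-bandit lower bound adapted to the $R$-sub-Gaussian model.
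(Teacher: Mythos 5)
Your proposal is correct and follows essentially the same two-ingredient route as the paper's proof: (i) after a finite burn-in, each pull of a suboptimal arm $a$ forces compensation at least $\Delta_a/3$ because $\hat r_{x,1,t}\ge \bbE[r_1]-\Delta_2/3$ and $\hat r_{x,a,t}\le \bbE[r_a]+\Delta_a/3$, and (ii) the consistency constraint makes $\alpha_{\bx_a}=\bbE[N_a(T)]/\log T$ asymptotically feasible for the program in Eq.~\eqref{eq:optimization}, so $C(T)\ge \sum_a \bbE[N_a(T)]\Delta_a/3 \ge (1-o(1))\,c_x(\mathcal{A},\btheta^*)\log T$. The only differences are cosmetic: you re-derive the information constraint via Bretagnolle--Huber where the paper cites Theorem 1 and Theorem 8 of \citet{lattimore2017end}, and your handling of the burn-in and failure events is somewhat more careful than the paper's.
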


Our proof relies on the following lemmas:
\begin{lemma}[Theorem 1 in~\citet{lattimore2017end}]\label{lemma:minimum_exploration}
Assume $G_{x,T}$ is invertible for sufficiently large $T$. For all suboptimal $a\in\mathcal{A}$ it holds that
\small
\begin{equation*}
    \limsup_{T \to \infty}{\log T \Vert \bx_a -  \bx_1 \Vert^2_{G_{x,T}^{-1}} \leq \frac{\Delta_{a}^2}{2}}
\end{equation*}
\normalsize
\end{lemma}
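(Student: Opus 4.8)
The plan is to establish this as an instance-dependent lower bound via a change-of-measure (information-theoretic) argument, in the style of the asymptotic analysis of finite-armed linear bandits. Fix a suboptimal arm $a$ (so $\Delta_{a}>0$), and regard the environment as parameterized by $\btheta^*$ with reward noise normalized to a unit-variance Gaussian, so that the relative entropy between two reward distributions with mean gap $\mu$ is exactly $\mu^2/2$ (this is what produces the constant $1/2$ in the statement). The core intuition is that if the algorithm gathered too little information along the direction $\bx_a-\bx_1$, i.e.\ if $\Vert\bx_a-\bx_1\Vert^2_{G_{x,T}^{-1}}$ were large, it could not statistically separate $\btheta^*$ from a nearby alternative $\btheta'$ under which arm $a$ is optimal, contradicting consistency. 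Concretely, first I would construct $\btheta'$ as the solution of the quadratic program $\min_{\btheta'}\Vert\btheta'-\btheta^*\Vert^2_{G_{x,T}}$ subject to $(\bx_1-\bx_a)^\mt\btheta'\le-\epsilon$ for a small margin $\epsilon>0$ (so arm $a$ is strictly better than arm $1$ under $\btheta'$). Since $G_{x,T}$ is invertible, the minimizer is explicit, $\btheta'=\btheta^*-(\Delta_{a}+\epsilon)\,G_{x,T}^{-1}(\bx_1-\bx_a)/\Vert\bx_1-\bx_a\Vert^2_{G_{x,T}^{-1}}$, with optimal value $\Vert\btheta'-\btheta^*\Vert^2_{G_{x,T}}=(\Delta_{a}+\epsilon)^2/\Vert\bx_1-\bx_a\Vert^2_{G_{x,T}^{-1}}$.

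Next I would relate this weighted distance to the divergence between the $T$-round interaction distributions $\mathbb{P}_{\btheta^*}$ and $\mathbb{P}_{\btheta'}$. By the divergence decomposition for Gaussian rewards, writing $N_a(T)$ for the number of pulls of arm $a$,
\begin{equation*}
D_{KL}(\mathbb{P}_{\btheta^*}\,\Vert\,\mathbb{P}_{\btheta'})=\frac{1}{2}\sum_{a}\mathbb{E}_{\btheta^*}[N_a(T)]\big(\bx_a^\mt(\btheta^*-\btheta')\big)^2=\frac{1}{2}\Vert\btheta^*-\btheta'\Vert^2_{G_{x,T}},
\end{equation*}
where the last equality uses the definition $G_{x,T}=\mathbb{E}_{\btheta^*}[\sum_t\bx_{a_t}\bx_{a_t}^\mt]=\sum_a\mathbb{E}_{\btheta^*}[N_a(T)]\bx_a\bx_a^\mt$. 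Substituting the optimal value from the previous step gives $D_{KL}(\mathbb{P}_{\btheta^*}\Vert\mathbb{P}_{\btheta'})=(\Delta_{a}+\epsilon)^2/\big(2\,\Vert\bx_a-\bx_1\Vert^2_{G_{x,T}^{-1}}\big)$.

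Then I would convert the consistency hypothesis into an asymptotic lower bound on this divergence. Under $\btheta^*$ arm $a$ is suboptimal, and under $\btheta'$ arm $1$ is suboptimal; since the algorithm is consistent (regret $o(T^p)$ for every $p>0$ on both instances), a Markov-inequality argument shows that $\{N_1(T)\le T/2\}$ has probability tending to $1$ under $\btheta'$ and to $0$ under $\btheta^*$. Applying the Bretagnolle--Huber inequality to this event and taking logarithms yields $\liminf_{T\to\infty}D_{KL}(\mathbb{P}_{\btheta^*}\Vert\mathbb{P}_{\btheta'})/\log T\ge 1$. Combining this with the exact divergence above gives
\begin{equation*}
\frac{(\Delta_{a}+\epsilon)^2}{2\,\Vert\bx_a-\bx_1\Vert^2_{G_{x,T}^{-1}}}\ge(1-o(1))\log T,
\end{equation*}
so that $\log T\,\Vert\bx_a-\bx_1\Vert^2_{G_{x,T}^{-1}}\le(\Delta_{a}+\epsilon)^2/\big(2(1-o(1))\big)$; taking $\limsup_{T\to\infty}$ and then $\epsilon\to0$ yields the claim.

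The step I expect to be the main obstacle is making the consistency-to-divergence transfer fully rigorous: one must verify that the minimizing $\btheta'$ lies in the admissible parameter set so the consistency guarantee applies to it, control the Markov bounds on $\{N_1(T)\le T/2\}$ tightly enough to extract the clean factor of $\log T$ (rather than $p\log T$ for $p<1$), and justify interchanging the $\epsilon\to0$ limit with the $\limsup$ in $T$. The divergence decomposition and the quadratic program are routine, but the asymptotic bookkeeping in Bretagnolle--Huber, together with the fact that $G_{x,T}$ itself depends on the algorithm's (unknown) behavior under $\btheta^*$, is where the care is needed.
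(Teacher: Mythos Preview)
The paper does not prove this lemma; it is quoted verbatim as Theorem~1 of \citet{lattimore2017end} and used as a black box in the proof of Theorem~\ref{theorem:comp_lb_no_gap}. So there is no ``paper's own proof'' to compare against.

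That said, your sketch is exactly the change-of-measure argument that underlies the cited result: construct an alternative parameter $\btheta'$ making arm $a$ optimal, compute the KL divergence between the two $T$-round interaction laws via the decomposition $D_{KL}=\tfrac{1}{2}\Vert\btheta^*-\btheta'\Vert^2_{G_{x,T}}$, and invoke Bretagnolle--Huber together with consistency to force $D_{KL}\gtrsim\log T$. The one genuine delicacy you flag is real: because your minimizer $\btheta'$ depends on $G_{x,T}$ and hence on $T$, the consistency hypothesis (which is a per-instance statement) does not apply directly to a moving target $\btheta'_T$. The clean fix in the literature is to argue by contradiction along a subsequence where the claimed inequality fails, and on that subsequence pick a \emph{fixed} $\btheta'$ (independent of $T$) so that consistency applies; alternatively, one restricts $\btheta'$ to a compact parameter set over which the $o(T^p)$ regret bound is uniform. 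Either route closes the gap you identified.
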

\begin{lemma}[Theorem 8 in~\citet{lattimore2017end}]\label{lemma:concentration}
For any $\delta \in [1/T, 1)$, $T$ sufficiently large and $t_0$ such that $G_{t_0}$ is almost surely non-singular,
\begin{equation*}
\mathbb{P}\left(\exists t\geq 0, \bx_a:\vert\hat r_{x,a,t} - \bbE[r_a]\vert\geq\sqrt{\Vert\bx_a\Vert^2_{G_{x,t}^{-1}}f_{T,\delta}}  \right)\leq\delta
\end{equation*}
where for some $c > 0$ universal constant
\[
f_{T,\delta} = 2\left(1+\frac{1}{\log (T)}\right)\log(1/\delta)+cd_x\log(d_x\log(T))
\]
\end{lemma}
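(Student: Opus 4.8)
The plan is to lower-bound $C(T)$ by the estimated reward gap accumulated over the exploration rounds, and then to observe that the per-arm pull counts of any consistent algorithm constitute a feasible point of the program~\eqref{eq:optimization}, so their weighted objective is at least its optimal value $c_x(\mathcal{A},\btheta^*)$. First I would fix a suboptimal arm $a$ and a round $t$ with $a_t=a$. The minimum compensation of Eq~\eqref{eq:incentive_no_gap} gives $c_{a,t}=\max_i\hat r_{x,i,t}-\hat r_{x,a,t}\ge\hat r_{x,1,t}-\hat r_{x,a,t}$. Rewriting this as $(\bx_1-\bx_a)^\mt\hat\btheta_{x,t}=\Delta_a-(\bx_1-\bx_a)^\mt(\btheta^*-\hat\btheta_{x,t})$, the per-round compensation is at least $\Delta_a$ minus the user's estimation error along the gap direction $\bx_1-\bx_a$.

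To control that error I would apply the self-normalized bound of Lemma~\ref{lemma:concentration} along the fixed direction $\bx_1-\bx_a$, giving $|(\bx_1-\bx_a)^\mt(\btheta^*-\hat\btheta_{x,t})|\le\sqrt{\lVert\bx_1-\bx_a\rVert^2_{G_{x,t}^{-1}}f_{T,\delta}}$ on an event of probability at least $1-\delta$. The calibration is the crux: choosing $\delta$ to be a constant makes $f_{T,\delta}=O(d_x\log(d_x\log T))$ grow only like $\log\log T$, whereas Lemma~\ref{lemma:minimum_exploration} forces any consistent algorithm to drive $\lVert\bx_1-\bx_a\rVert^2_{G_{x,t}^{-1}}\le\frac{\Delta_a^2}{2\log t}(1+o(1))$. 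Their product is $\Delta_a^2\cdot O(\log\log T/\log t)$, so the error is a vanishing fraction of $\Delta_a$ for all large $t$, and hence $c_{a,t}\ge\frac{\Delta_a}{3}$ on the good event once the exploration directed at $a$ has passed an $O(\log\log T)$ burn-in.

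Summing over rounds, and using $c_{a,t}\ge 0$ otherwise, the negligible $o(\log T)$ burn-in pulls drop out and I obtain $C(T)\ge(1-\delta)(1-o(1))\sum_{a:\Delta_a>0}\mathbb{E}[N_a(T)]\frac{\Delta_a}{3}$, where $N_a(T)$ is the pull count of $a$. To identify the right-hand side with the program value, set $\alpha_{\bx_a}=\mathbb{E}[N_a(T)]/\log T$, so that $H_{x,T}=\sum_a\alpha_{\bx_a}\bx_a\bx_a^\mt=G_{x,T}/\log T$ up to the negligible ridge term and $\lVert\bx_a\rVert^2_{H_{x,T}^{-1}}=\log T\,\lVert\bx_a\rVert^2_{G_{x,T}^{-1}}$. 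Lemma~\ref{lemma:minimum_exploration}, applied to the gap directions $\bx_1-\bx_a$, shows these $\alpha_{\bx_a}$ asymptotically satisfy the constraint $\lVert\bx_a\rVert^2_{H_{x,T}^{-1}}\le\Delta_a^2/2$, i.e.\ they are feasible for~\eqref{eq:optimization}. Since $c_x(\mathcal{A},\btheta^*)$ is the infimum of the objective over feasible points, $\sum_a\alpha_{\bx_a}\frac{\Delta_a}{3}\ge c_x(\mathcal{A},\btheta^*)$; multiplying by $\log T$ and combining with the previous display yields $C(T)=\Omega(c_x(\mathcal{A},\btheta^*)\log T)$.

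The main obstacle is the joint calibration of $\delta$ and the round index together with the attendant timing argument. I need $f_{T,\delta}$ small relative to $\log t$ so that the estimation error is an $o(1)$ fraction of $\Delta_a$, yet the concentration event must keep probability bounded away from zero so the expectation is not killed; a constant $\delta$ threads both but leaves a constant-factor loss that must be carried through the random, event-correlated counts $N_a(T)$. The accompanying difficulty is showing that only $o(\log T)$ of the $\Theta(\log T)$ pulls of each suboptimal arm fall in the pre-concentration regime---this uses that lowering $\lVert\bx_1-\bx_a\rVert^2_{G_{x,t}^{-1}}$ from the burn-in threshold to its final $\Theta(1/\log T)$ value requires the bulk of the exploration, while merely reaching the threshold costs $O(\log\log T)$ pulls. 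Secondary technical points are invoking the self-normalized inequality along the fixed direction $\bx_1-\bx_a$ rather than per arm, absorbing the ridge term $\lambda\bI$ in $G_{x,t}$ (it is $O(1)$ against the $\Theta(\log T)$ growth), and accounting for the $\limsup$/$(1+o(1))$ nature of Lemma~\ref{lemma:minimum_exploration}, which is exactly why the conclusion is an asymptotic $\Omega(\cdot)$ for sufficiently large $T$.
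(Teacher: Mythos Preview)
Your proposal does not address the stated lemma at all. Lemma~\ref{lemma:concentration} is a concentration inequality quoted verbatim from \citet{lattimore2017end} (their Theorem~8); the paper does not prove it and simply invokes it as a black-box tool. There is nothing to prove here beyond citing the source.

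What you have written is instead a proof sketch for Theorem~\ref{theorem:comp_lb_no_gap}, the compensation lower bound, which \emph{uses} Lemma~\ref{lemma:concentration} as an ingredient. If that was your intent, your approach is essentially the same as the paper's: both argue that after a burn-in period each pull of a suboptimal arm $a$ costs at least $\Delta_a/3$ in compensation (by combining Lemma~\ref{lemma:minimum_exploration} with Lemma~\ref{lemma:concentration} to make the estimation error an $o(1)$ fraction of $\Delta_a$), then set $\alpha_{\bx_a}=\mathbb{E}[N_a(T)]/\log T$ and verify asymptotic feasibility for the program~\eqref{eq:optimization}. The only notable difference is that the paper handles the best arm and the suboptimal arms separately---it uses the $o(T)$ regret to guarantee arm~1 is pulled $T/2$ times and hence concentrates, and uses Eq.~\eqref{eq:minimum_exploration} only for suboptimal arms---whereas you work directly with the gap direction $\bx_1-\bx_a$ throughout. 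Your route is slightly more streamlined but otherwise equivalent in spirit and in the constants obtained.
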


\begin{proof}[Proof Sketch]
Suppose an algorithm is consistent with regret $o(T^p)$, 
Lemma~\ref{lemma:minimum_exploration}
suggests that the algorithm must collect a sufficient number of samples such that the width of the confidence interval is small enough to identify the suboptimal arms. 
Since the algorithm has $o(T)$ regret, we can find $t_1$ such that the best arm is pulled at least $T/2$ times; and because of the concentration result in Lemma~\ref{lemma:concentration}, its confidence interval is smaller than $\Delta_{2}/3$ where $\Delta_{2}$ is the reward gap between the best arm and second best arm. This means for $t>t_1$ we have $\hat r_{x,1, t} \geq \bbE[r_{1}] - \Delta_{2}/3$ with a high probability.

For any other arm $a$, from Lemma~\ref{lemma:minimum_exploration} and the concentration bound we can show that it will also be pulled enough times such that its confidence interval is smaller than $\Delta_{a}/3$ with a high probability after a fixed round $t_a$. Therefore, for $t>t_a$ we have $\hat r_{x, a, t} \leq \bbE[r_{a}] + \Delta_{a}/3$. Combining the two inequalities we know that after a fixed time point, the minimum required compensation to incentivize the user to pull arm $a$ is $\hat r_{x, 1, t} - \hat r_{x, a, t} \geq \Delta_{a}/3$. We then use the optimization problem in Eq~\eqref{eq:optimization} to obtain the compensation lower bound, where the optimization minimizes the total compensation and satisfies the consistent constraints that the gaps of all suboptimal arms are identified with high confidence.
\end{proof}

Next, we construct an example to illustrate our lower bound analysis. 

\textbf{Example}. When $\{\bx_a = e_a \in \mathbb{R}^{d_x}\}_{a\in \mathcal{A}}$ are the basis vectors, the problem reduces to a non-contextual $K$-armed bandit with $K=d_x$. By setting $\Vert\bx_a \Vert^2_{H_{x,T}^{-1}} = \Delta_{a}^2/2$, we have $\alpha_{\bx_a} = 2/\Delta_{a}^2$ and $c_x(\mathcal{A}, \btheta^*) = \sum_{a\in \mathcal{A}, \Delta_{a}>0}\frac{2}{3\Delta_{a}}$. This gives us the compensation lower bound as follows,
\begin{equation*}
    C(T)= \Omega\left(\sum_{a\in \mathcal{A}, \Delta_{a}>0}\frac{\log(T)}{\Delta_{a}} \right)
\end{equation*}
This result recovers the lower bound of incentivized exploration in non-contextual bandits in \cite{Wang2018MultiarmedBW}. We also notice that the result can be further bounded as 
\begin{equation*}
    C(T) = \Omega\left(\frac{d_x\log(T)}{\max_{a\in \mathcal{A}}\Delta_{a}} \right),
\end{equation*}
where we observe a linear dependency on dimension $d_x$.

Note that our compensation lower bound is in the order of $\Omega(\log(T))$, because it is a gap-dependent bound. We leave the question of whether one can obtain an $\Omega(\sqrt{T})$ gap-independent compensation lower bound for general infinite arm setting, which will match our upper bound in Theorem~\ref{theorem:comp}, as an open problem.
\begin{figure*}[ht]
\centering
\setlength\tabcolsep{.5pt}
\begin{tabular}{c c c}
\includegraphics[width=5.5cm]{./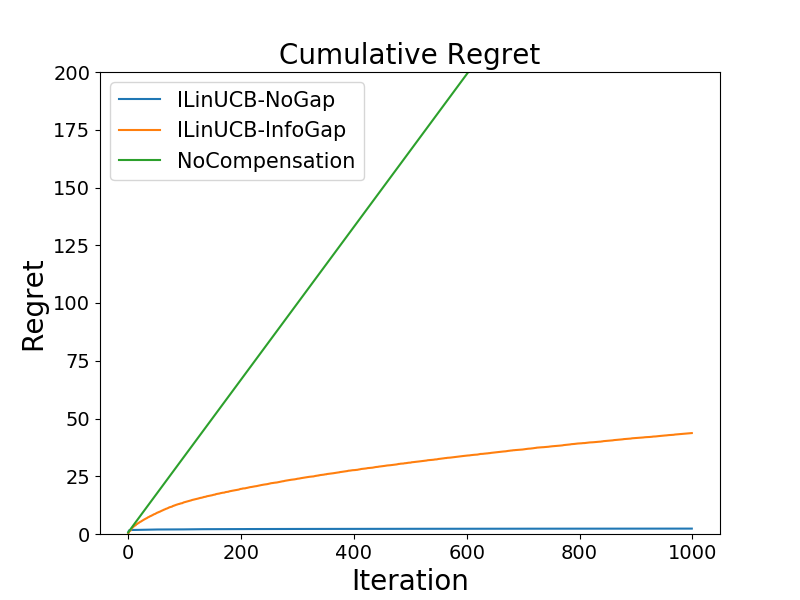} &
\includegraphics[width=5.5cm]{./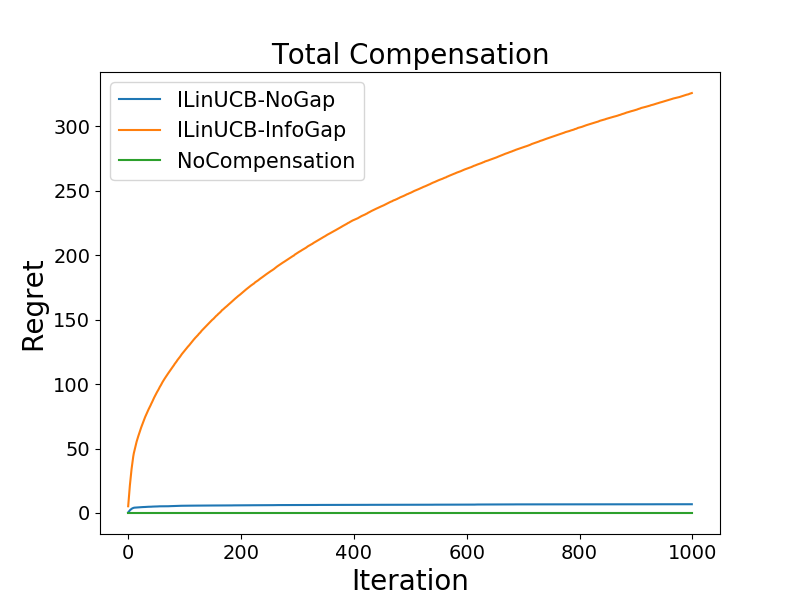}&
\includegraphics[width=5.5cm]{./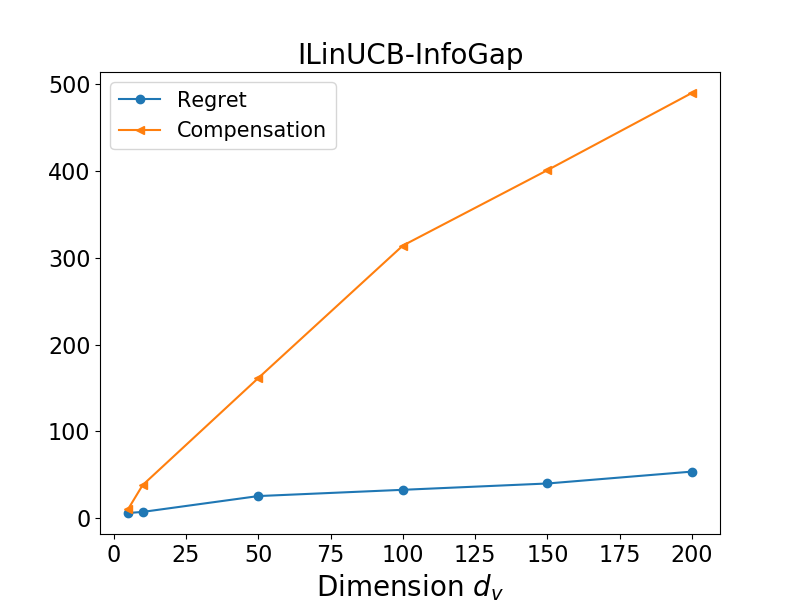}
\\
(a) Regret & (b) Compensation & (c) Varying dimension $d_v$\\
\end{tabular}
\vspace{-1mm}
\caption{Simulation result on randomly sampled features with $d_x=5$ and $d_v=100$}
\label{fig_simu}
\vspace{-2mm}
\end{figure*}

\begin{corollary}[Compensation lower bound under information gap]
Consider any consistent algorithm observing context features $\{\bv_a\}_{a\in \mathcal{A}}$ that guarantees an $o(T^p)$ regret upper bound for any $T>0$ and $0<p\leq 1$. To incentivize the user who observes context features $\{\bx_a\}_{a\in \mathcal{A}}$ satisfying Assumption~\ref{assumption} with a least square estimator, the total compensation $C(T)$ for sufficiently large $T$ is 
\begin{equation*}
    \Omega\left(c_v(\mathcal{A}, \btheta^*)\log(T)\right),
\end{equation*}
where $c_v(\mathcal{A}, \btheta^*)$ is the optimal value of the following optimisation problem
\begin{equation*}
    \begin{split}
      &c_v(\mathcal{A}, \btheta^*) = \inf_{\alpha \geq 0} \sum_{\bv_a} \alpha_{\bv_a}\frac{\Delta_{a}}{3}\\
      &\text{s.t. } \Vert \bv_a \Vert^2_{H_{v,T}^{-1}} \leq \frac{\Delta_{a}^2}{2}, \forall \bv_a \text{ with } \Delta_{a} > 0
    \end{split}
\end{equation*}
where $H_{v,T} = \sum_{\bv_a} \alpha_{\bv_a}\bv_{a_t} \bv_{a_t}^\mt$.
\end{corollary}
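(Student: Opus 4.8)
The plan is to re-run the proof of Theorem~\ref{theorem:comp_lb_no_gap} almost verbatim, inserting one extra ingredient that bridges the two feature representations. The only conceptual novelty is that the algorithm's exploration guarantee lives in the system's representation $\{\bv_a\}$, while the compensation is charged against the user's least-square estimates, which live in $\{\bx_a\}$. First I would observe that consistency and the $o(T^p)$ regret guarantee are properties of the realized sequence of pulls alone: since $\bbE[r_a]=\bx_a^\mt\btheta_x^*=\bv_a^\mt\btheta_v^*$, the regret $R(T)$ does not depend on which feature map is used to describe the algorithm. Hence Lemma~\ref{lemma:minimum_exploration}, applied in the system's representation, gives for every suboptimal $a\in\mathcal{A}$ that $\limsup_{T\to\infty}\log T\,\lVert\bv_a-\bv_1\rVert^2_{G_{v,T}^{-1}}\le\Delta_a^2/2$, where $G_{v,T}=\bbE[\sum_{t=1}^T\bv_{a_t}\bv_{a_t}^\mt]$; equivalently, the expected pull counts $N_a(T)$ are large enough that this holds.

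Second, I would transfer this to the user's side. Because $\bx_a=P\bv_a$ (Assumption~\ref{assumption}), we have the exact identity $G_{x,t}=\bbE[\sum_{s\le t}\bx_{a_s}\bx_{a_s}^\mt]=P\,G_{v,t}\,P^\mt$ for every $t$. Combining this with the positive-semidefinite fact used inside the proof of Lemma~\ref{lemma:cb}, namely $P^\mt(PMP^\mt)^{-1}P\preceq M^{-1}$ for $M\succ0$, gives $\lVert\bx_a\rVert^2_{G_{x,t}^{-1}}=\bv_a^\mt P^\mt(PG_{v,t}P^\mt)^{-1}P\bv_a\le\lVert\bv_a\rVert^2_{G_{v,t}^{-1}}$, and likewise $\lVert\bx_a-\bx_1\rVert^2_{G_{x,t}^{-1}}\le\lVert\bv_a-\bv_1\rVert^2_{G_{v,t}^{-1}}$. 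In words: the user, employing the strictly more informative representation, has confidence intervals no wider than a hypothetical agent that used $\{\bv_a\}$, so the exploration forced by Step~1 is already \emph{sufficient} for the user's estimates to concentrate. Feeding this into Lemma~\ref{lemma:concentration} (applied to the user's estimator, so with dimension $d_x\le d_v$, which only tightens things) reproduces exactly the two one-sided bounds from the proof of Theorem~\ref{theorem:comp_lb_no_gap}: there is a fixed round $t_a$ after which $\hat r_{x,a,t}\le\bbE[r_a]+\Delta_a/3$ with high probability, and --- because $o(T)$ regret forces the best arm to be played at least $T/2$ times --- a fixed round $t_1$ after which $\hat r_{x,1,t}\ge\bbE[r_1]-\Delta_2/3$.

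Third, I would combine exactly as before: for $t>\max(t_1,t_a)$ the minimum compensation needed to make the user pull $a$ is $\hat r_{x,1,t}-\hat r_{x,a,t}\ge\Delta_a-\Delta_2/3-\Delta_a/3\ge\Delta_a/3$, using $\Delta_2\le\Delta_a$. Summing over suboptimal arms weighted by their expected pull counts and discarding the $O(1)$ contribution of the finite burn-in rounds gives $C(T)\ge\tfrac13\sum_{a:\Delta_a>0}N_a(T)\Delta_a-O(1)$. Setting $\alpha_{\bv_a}=N_a(T)/\log T$ and $H_{v,T}=\sum_a\alpha_{\bv_a}\bv_a\bv_a^\mt=G_{v,T}/\log T$, Step~1 says $\lVert\bv_a\rVert^2_{H_{v,T}^{-1}}\le\Delta_a^2/2$ (after the same reduction from $\bv_a-\bv_1$ to $\bv_a$ as in Theorem~\ref{theorem:comp_lb_no_gap}, valid because arm~1 is sampled order-$T$ times so $\lVert\bv_1\rVert^2_{G_{v,T}^{-1}}=O(1/T)$ is negligible), i.e.\ $\{\alpha_{\bv_a}\}$ is feasible for the program defining $c_v(\mathcal{A},\btheta^*)$. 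Since that program is an infimum over feasible $\alpha$, $\tfrac13\sum_a\alpha_{\bv_a}\Delta_a\ge c_v(\mathcal{A},\btheta^*)$, hence $C(T)=\Omega\left(c_v(\mathcal{A},\btheta^*)\log T\right)$.

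The main thing to get right --- rather than a deep obstacle --- is the bookkeeping of which representation each ingredient lives in: consistency and Lemma~\ref{lemma:minimum_exploration} must be invoked in the $\bv$-space where the algorithm operates, the $\Delta_a/3$ compensation gap must be derived from the user's $\bx$-based estimates, and the identity $G_{x,t}=PG_{v,t}P^\mt$ together with the PSD inequality of Lemma~\ref{lemma:cb} is the hinge connecting the two. One must also check that the transfer inequality points in the favorable direction --- it does, since more exploration in $\bv$-space implies tighter $\bx$-estimates, so it would be a mistake to suspect the $\bv$-space exploration is ``in the wrong space.'' Two secondary points: $G_{v,T}$ must be invertible for Lemma~\ref{lemma:minimum_exploration}, which is guaranteed by the standing assumption that $\{\bv_a\}$ spans $\mathbb{R}^{d_v}$ (so $|\mathcal{A}|\ge d_v$); and the calibration of constants, turning a width-$O(\Delta_a)$ bound into a width-$\le\Delta_a/3$ bound, is handled exactly as in Theorem~\ref{theorem:comp_lb_no_gap} since the lower-order terms in Lemmas~\ref{lemma:minimum_exploration} and~\ref{lemma:concentration} vanish as $T\to\infty$ and are absorbed into the $\Omega(\cdot)$.
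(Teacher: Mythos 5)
Your proof is correct and follows the same route as the paper's: run the argument of Theorem~\ref{theorem:comp_lb_no_gap} with Lemma~\ref{lemma:minimum_exploration} invoked in the system's $\bv$-representation, derive the $\Delta_{a}/3$ per-pull compensation from the user's $\bx$-based least-square estimates, and certify feasibility of $\alpha_{\bv_a}=N_a(T)/\log T$ for the program defining $c_v(\mathcal{A},\btheta^*)$. The one place you go beyond the paper is in making the cross-representation transfer explicit --- the identity $G_{x,t}=PG_{v,t}P^\mt$ combined with $P^\mt\left(PG_{v,t}P^\mt\right)^{-1}P\preceq G_{v,t}^{-1}$ yields $\lVert\bx_a\rVert^2_{G_{x,t}^{-1}}\le\lVert\bv_a\rVert^2_{G_{v,t}^{-1}}$, so the $\bv$-space exploration forced by consistency suffices for the user's estimates to concentrate --- whereas the paper only informally asserts that the lower-dimensional user representation gives ``faster concentration''; this is a tightening of the same argument rather than a different method.
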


The proof of compensation lower bound under information gap mostly follows Theorem~\ref{theorem:comp_lb_no_gap} by simply replacing the user's feature $\bx_a$ with the system's feature $\bv_a$. The main difference is that when applying the concentration bound in Lemma~\ref{lemma:concentration} to derive the minimum required compensation, we still use $\bx_a$ because the minimum amount is based on the \textit{user's} estimated reward difference between the currently best arm and the exploratory arm.
However, we notice that $\bx_a$ or $d_x$ does not directly appear in this lower bound. The impact of $\bx_a$ being in a lower-dimensional space is that we have a faster concentration bound to have the confidence interval smaller than $\Delta_{a}/3$ at an earlier time point. Since we consider $T\to\infty$, this does not change the order of the bound and the final result is dominated by $\bv_a$. 

Considering a similar example of $K$-armed bandit setting where $K = d_v$, we can obtain 
\begin{equation*}
    C(T) = \Omega\left(\frac{d_v\log(T)}{\max_{a\in \mathcal{A}}\Delta_{a}} \right)
\end{equation*}
where we observe a linear dependency on dimension $d_v$.

\section{Experiments}

We use simulation-based experiments to verify the effectiveness of our proposed incentivized exploration solution. 
In our simulations, we generate a size-$K$ arm pool $\cA$, in which each arm $a$ is associated with a $d_v$-dimension vector $\bv_a$ as the system observed features and a $d_x$-dimension vector $\bx_a$ as the user observed features. Each dimension of $\bv_a$ is drawn from a set of zero-mean Gaussian distributions with variances sampled from a uniform distribution $U(0,1)$. Each $\bv_a$ is then normalized to $\Vert\bv_a\Vert_2 = 1$.
We then sample the elements of the $d_x \times d_v$ transformation matrix $P$ from $N(0, 1)$ and normalize each row $i$ by $\Vert P_i \Vert_2 = 1$. Following Assumption~\ref{assumption}, the user observed features $\bx_a$ are generated as $\bx_a = P\bv_a$. $P$ guarantees that $\Vert\bx_a\Vert_2 \leq \Vert\bv_a\Vert_2 = 1$. User's model parameter $\btheta_x^*$ is sampled from $N(0, 1)$ and normalized to $\Vert\btheta_x^*\Vert_2  = 1$. System's model parameter is set to $ \btheta_v^*= P\btheta_x^*$.
At each round $t$, the same set of arms were presented to all the algorithms, but the system and the user observe different features respectively. After the user pulls an arm $a_t$, both the user and the system observe its reward following Eq~\eqref{eq:reward}. We set $d_x$ to 5, $d_v$ to 100, the standard derivation $\sigma$ of Gaussian noise $\eta_t$ to 0.1, and the arm pool size $K$ to 100 in our simulations.

We compare the following algorithms:
1) ILinUCB-InfoGap: our Algorithm~\ref{alg} where $\{\bv_a\}_{a\in\mathcal{A}_t}$ is observed by the system; 2) ILinUCB-NoGap: our Algorithm~\ref{alg:no_gap} where both the system and the user observe $\{\bx_a\}_{a\in\mathcal{A}}$; 3) NoCompensation: a baseline system that does not offer any compensation to the user. The myopic user always pulls the current best arm. We set the probability $\delta = 0.01$ and regularization coefficient $\lambda=0.1$ for all the algorithms.

We report the averaged results of 10 runs where in each run we sample a random model parameter $\btheta_x^*$.  In Figure~\ref{fig_simu}(a), we observe that without providing any compensation, the myopic user suffers a linear regret, which emphasizes the importance of incentivized exploration. Both ILinUCB-InfoGap and ILinUCB-NoGap enjoy sublinear regret and compensation. The added regret of ILinUCB-InfoGap shows the algorithm explores slower in the large $R^{d_v}$ space because of the information gap.

We notice that the total compensation of ILinUCB-InfoGap in Figure~\ref{fig_simu}(b) is sublinear and keeps increasing. The algorithm has to always compensate due to the information gap as we discussed before. ILinUCB-NoGap, however, rarely compensates in the later stage. This is because  when system  explored sufficiently, greedy choice on the user side agrees with the UCB strategy on the system side, and thus no compensation is needed.   
In Figure~\ref{fig_simu}(c), we vary the dimension of system's feature $d_v$ from $5$ to $200$ while fixing $d_x = 5$. We observe that both regret and compensation increases linearly with $d_v$, which confirms our theoretical upper bound. 

\begin{figure}[ht]
\vspace{-1mm}
\centering
\setlength\tabcolsep{.5pt}
\begin{tabular}{c c}
\hspace*{-0.4cm}
\includegraphics[width=4.5cm]{./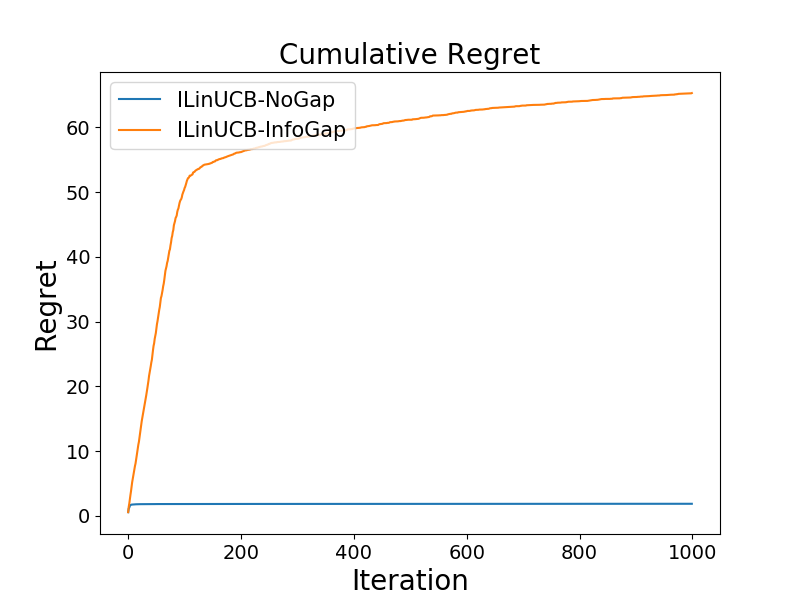} &
\includegraphics[width=4.5cm]{./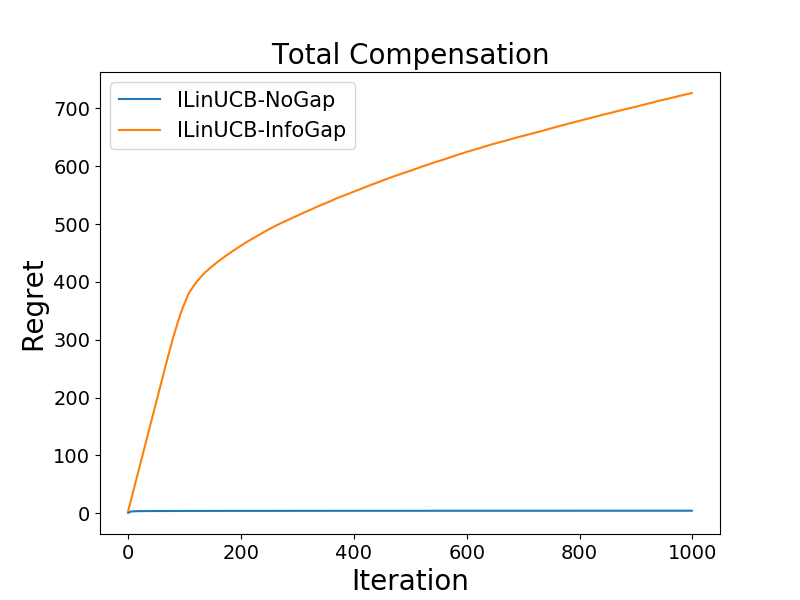}
\\
(a) Regret & (b) Compensation\\
\end{tabular}
\vspace{-1mm}
\caption{MAB setting where the system only observes the indices of the arms.} \label{fig_simu_MAB}
\vspace{-1mm}
\end{figure}
In Figure~\ref{fig_simu_MAB}, we simulate a $K$-armed bandit setting where only the indices of the arms are available to the system. The system sets $\bv_a = e_a \in \mathbb{R}^K$. The rest of the settings are the same as described above. In this setting, our ILinUCB-InfoGap  explores almost equivalently to UCB1~\cite{auer2002finite} and can be viewed as a more optimism version of the Incentivized UCB algorithm in \cite{Wang2018MultiarmedBW} with a wider confidence interval in consideration of the information gap. The system observes the least  information in this setting. We notice that its regret and compensation are much larger than the results in Figure~\ref{fig_simu} where $\{\bv_a\}_{a\in\mathcal{A}}$ is more informative about the rewards. This again confirms that the system inevitably suffers higher regret and compensation when the features are less informative. 

\section{Related Work}\label{sec:related_work}

The incentivized exploration in multi-armed bandits has been studied since \cite{kremer2014implementing, frazier2014incentivizing}. See \citet{slivkins2017incentivizing} for an overview. One line of the studies \cite{kremer2014implementing,mansour2015bayesian,immorlica2018incentivizing, sellke2020sample} assume the system has information advantage on observing the full arm-pulling history while users do not. The system leverages the information asymmetry to recommend exploratory arms as long as the users do not have a better choice from their perspective. Another line \cite{frazier2014incentivizing, chen2018incentivizing, Wang2018MultiarmedBW} considered the setting where the arm-pulling history is publicly available to both system and users and the system offers compensations to an arm for incentivized exploration. Our setting follows this line of research.  

Incentivized learning with monetary payments was first studied in \cite{frazier2014incentivizing} in a Bayesian setting with discounted regret and compensation. 
\citet{chen2018incentivizing} studied a heterogeneous users setting, where user diversity led to their solution with constant compensation. \citet{agrawal2020incentivising}  considered heterogeneous contexts in a contextual bandit setting. In \cite{Wang2018MultiarmedBW}, the authors analyzed the non-Bayesian and non-discounted reward case and showed $O(\log T)$ regret and compensation in a stochastic MAB setting.   \citet{liu2020incentivized} considered the reward feedback is biased because of the compensation.
\citet{kannan2017fairness} considered incentivized exploration for fair recommendation. Our setting is mostly similar to \citet{Wang2018MultiarmedBW}, i.e., non-Bayesian and non-discounted reward, but is studied under the linear contextual bandit setting. 
We should note all the aforementioned studies assume the system and the users share the same information such as arm pulls, rewards and contexts, and the system calculates the compensation based on the shared information. Our setting is strictly more challenging. The information gap is caused by information asymmetry: the system cannot access the feature vectors employed by the users. As a result, users' reward estimation will be different from the system' and the precise amount of payment is harder to compute. 

There are several recent works study low-rank bandits, which however are intrinsically different from ours.
For example, \citet{lale2019stochastic} consider the contexts are sampled from a low-dimensional subspace and propose a PCA-based solution to reduce the dimension. \citet{yang2021impact} study multi-task linear bandits with a shared low-rank structure. These methods assume the learning problem is generated from a low-rank structure but presented in a high-dimensional space. But in our setting, the system's observed contexts are already sampled from a high-dimensional compact space, whose dimension cannot be further reduced. 
The information gap in representation asymmetry is a unique problem in this two-party game setting.

\section{Conclusions and Future Work}
In this paper, we introduced a new and practically-motivated problem of incentivized exploration under information gap in linear contextual bandits. The key challenge is the information asymmetry in the observed context features between a system and a myopic user. We proposed an algorithm that offers sufficient compensation to guarantee users to follow LinUCB's exploration strategy. We proved the regret and compensation upper bound of our algorithm are in the order of  $O(d_v\sqrt{T}\log T)$ under information gap and $O(d_x\sqrt{T}\log T)$ without information gap. We also analyzed the compensation lower bound of the problem. As our future work, we plan to study how to incentivize the users following other types of exploration strategy such as Thompson Sampling~\cite{chapelle2011empirical,agrawal2013thompson,abeille2017linear}. It is also important to investigate whether we can obtain a gap-independent $\Omega(\sqrt{T})$  compensation lower bound to match with the upper bound.

\bibliography{main}
\bibliographystyle{icml2021}
\clearpage
\appendix
\onecolumn
\section{Proof Details}

\begin{proof}[Proof of Lemma~\ref{lemma:cb}]
According to the definition of confidence interval, $\textit{CB}_{v,t}(\bv_a) = \alpha_{v,t}\lVert\bv_a\rVert_{\bA_{v,t}^{-1}}$ and $\textit{CB}_{x,t}(\bx_a) = \alpha_{x,t}\lVert\bx_a\rVert_{\bA_{x,t}^{-1}}$. 
We first prove that $\lVert\bv_a\rVert_{\bA_{v,t}^{-1}} \geq \lVert\bx_a\rVert_{\bA_{x,t}^{-1}}$.
By Eq~\eqref{eq:assumption}, we have $\bA_{x,t} - \lambda\bI = \sum_{i=1}^t \bx_{a_i}\bx_{a_i}^\mt = \sum_{i=1}^t P\bv_{a_i}\bv_{a_i}^\mt P^\mt = P(\bA_{v,t}- \lambda\bI)P^\mt$ and
\begin{align*}
\lVert\bx_a\rVert_{\bA_{x,t}^{-1}} &= \sqrt{\bx_a^\mt \bA_{x,t}^{-1} \bx_a}  \\
&= \sqrt{\bv_a^\mt P^\mt \left(\left(P(\bA_{v,t}- \lambda\bI)P^\mt\right)+\lambda\bI\right)^{-1} P \bv_a}
\end{align*}
We prove 
\begin{equation*}
\bv_a^\mt \bA_{v,t}^{-1} \bv_a \geq  \bx_a^\mt \bA_{x,t}^{-1} \bx_a =  \bv_a^\mt P^\mt \left(\left(P(\bA_{v,t}- \lambda\bI)P^\mt\right)+\lambda\bI\right)^{-1} P \bv_a    
\end{equation*}

by showing $ \bA_{v,t}^{-1} -  P^\mt \left(\left(P(\bA_{v,t}- \lambda\bI)P^\mt\right)+\lambda\bI\right)^{-1} P$ is a positive semi-definite matrix based on the property of Schur complement.


Denote
\begin{equation*}
M=
\begin{bmatrix}
\bA_{v,t}^{-1} & P^\mt\\
P & \left(P(\bA_{v,t}- \lambda\bI)P^\mt\right)+\lambda\bI
\end{bmatrix} .
\end{equation*}
We have
\begin{align*}
M/\bA_{v,t}^{-1} &= \left(P(\bA_{v,t}- \lambda\bI)P^\mt\right)+\lambda\bI - \left(P^\mt\right)^\mt {\bA_{v,t}} P^\mt \\
&= P\bA_{v,t}P^\mt - \lambda P P^\mt +\lambda\bI - P{\bA_{v,t}} P^\mt \\
&= \lambda\left(\bI - PP^\mt\right) \\
&\succeq 0
\end{align*}
The last step holds because $P$'s largest singular value is smaller than 1, the eigenvalues of $PP^T$ are smaller than 1 and $\bI - PP^\mt \succeq 0$. 
Because $\bA_{v,t}^{-1} \succ 0$ and $M/\bA_{v,t}^{-1} \succeq 0 $, according to the property of Schur complement we have $M \succeq 0$. Because $\left(P(\bA_{v,t}- \lambda\bI)P^\mt\right)+\lambda\bI = \bA_{x,t}\succ 0$ and $M \succeq 0$, applying the property again we have $M/\left(\left(P(\bA_{v,t}- \lambda\bI)P^\mt\right)+\lambda\bI\right) \succeq 0$, which gives us $\bA_{v,t}^{-1} - P^\mt \left(\left(P(\bA_{v,t}- \lambda\bI)P^\mt\right)+\lambda\bI\right)^{-1} P \succeq 0$. By the definition of positive semi-definite matrix, we have $\bv_a^\mt \bA_{v,t}^{-1} \bv_a -  \bv_a^\mt P^\mt \left(\left(P(\bA_{v,t}- \lambda\bI)P^\mt\right)+\lambda\bI\right)^{-1} P \bv_a \geq 0$, which means $\lVert\bv_a\rVert_{\bA_{v,t}^{-1}} \geq \lVert\bx_a\rVert_{\bA_{x,t}^{-1}}$.

According to Lemma~\ref{lemma:define_cb}, $\alpha^v_t = R\sqrt{d_v \log \frac{1+t/\lambda}{\delta}} +\sqrt{\lambda}$ and $\alpha^x_t = R\sqrt{d_x \log \frac{1+t/\lambda}{\delta}} +\sqrt{\lambda}$. Since $d_v \geq d_x$, we have $\alpha^v_{t} \geq \alpha^x_{t}$.
Combining the two results and we finished the proof of $\textit{CB}_{v,t}(\bv_a) \geq \textit{CB}_{x,t}(\bx_a)$ holds for any arm $a$ at any time $t$. 
\end{proof}

\begin{proof}[Proof of Theorem~\ref{theorem:comp_no_gap}]
Following the definition of total compensation,  we have
\begin{align*}
\text{C}(T) &=\sum_{t=1}^T \bbE[c_{a_t, t}]\\
&= \sum_{t=1}^T \left(\max_i \hat r_{x,i,t} - \hat r_{x,a_t,t}\right)\\
&\leq \sum_{t=1}^T \left(\max_i \left(\hat r_{x,i,t} + \textit{CB}_{x,t}(\bx_{i})\right) - \hat r_{x,a_t,t}\right)\\
&= \sum_{t=1}^T \left(\hat r_{x,a_t,t} + \textit{CB}_{x,t}(\bx_{a_t}) - \hat r_{x,a_t,t}\right)\\
&= \sum_{t=1}^T \textit{CB}_{x,t}(\bx_{a_t})\\
\end{align*}
where the third step holds with probability at least $1-\delta$ and the fourth step is based on the UCB arm selection strategy. 

So with probability at least $1-\delta$, we bound the total compensation as follows,
\begin{align*}
\text{C}(T) &\leq \sum_{t=1}^T \textit{CB}_{x,t}(\bx_{a_t})\\
&\leq \sqrt{T \sum_{t=1}^T \textit{CB}^2_{x,t}(\bx_{a_t})}\\
&= \sqrt{T \sum_{t=1}^T \alpha_{x,t}^2 \lVert\bx_a\rVert_{\bA_{x,t}^{-1}}^2}\\
&\leq \sqrt{T \alpha_{x,T}^2\sum_{t=1}^T \lVert\bx_a\rVert_{\bA_{x,t}^{-1}}^2}\\
&\leq  \alpha_{x,T}\sqrt{T\sum_{t=1}^T \lVert\bx_a\rVert_{\bA_{x,t}^{-1}}^2}\\
\end{align*}
According to Lemma 11 of~\cite{Improved_Algorithm}, $\sum_{t=1}^T \lVert\bx_a\rVert_{{\bA_{x,t}}^{-1}}^2 \leq d_x\log(\lambda + T/d_v)$. Combining with $\alpha_{x,t} = R\sqrt{d_x \log \frac{1+t/\lambda}{\delta}} +\sqrt{\lambda}$ and we finished the proof.
\end{proof}

\begin{proof}[Proof of Theorem~\ref{theorem:regret}]
We bound cumulative regret by
\begin{align*}
\text{R}(T) &= \sum_{t=1}^T\big(\bbE[r_{a_t^*}] -\bbE[r_{a_t}]\big)\\
&= \sum_{t=1}^T\left(\bv_{a_t^*}^\mt\btheta_v^*-\bv_{a_t}^\mt\btheta_v^*\right)\\
&\leq \sum_{t=1}^T\left(\bv_{a_t^*}^\mt\hat\btheta_{v,t} + 2\textit{CB}_{v,t}(\bv_{a_t^*})-\bv_{a_t}^\mt\btheta_v^*\right)\\
&\leq \sum_{t=1}^T\left(\bv_{a_t}^\mt\hat\btheta_{v,t} + 2\textit{CB}_{v,t}(\bv_{a_t})-\bv_{a_t}^\mt\btheta_v^*\right)\\
&\leq \sum_{t=1}^T 2\textit{CB}_{v,t}(\bv_{a_t})\\
\end{align*}
The third step holds with probability at least $1-\delta$ according to the definition of confidence interval. The fourth step holds with probability at least $1-2\delta$ according to Lemma~\ref{lemma:incentive}, where the users are incentivized to pull arms according to UCB exploration strategy as shown in Eq~\eqref{eq:relaxed_UCB}. Taking a union bound and the above inequality holds with probability at least $1-3\delta$. 

We continue bounding the cumulative regret with probability at least $1-3\delta$ as follows,
\begin{align*}
\text{R}(T) &\leq 2\sqrt{T \sum_{t=1}^T \textit{CB}^2_{v,t}(\bv_{a_t})}\\
&= 2\sqrt{T \sum_{t=1}^T \alpha_{v,t}^2 \lVert\bv_a\rVert_{\bA_{v,t}^{-1}}^2}\\
&\leq  2\alpha_{v,T}\sqrt{T\sum_{t=1}^T \lVert\bv_a\rVert_{\bA_{v,t}^{-1}}^2}\\
&\leq \left(2R\sqrt{d_v \log \frac{1+T/\lambda}{\delta}} +\sqrt{\lambda}\right)\sqrt{Td_v\log(\lambda + \frac{T}{d_v})}
\end{align*}
where we finished the proof by combining $\sum_{t=1}^T \lVert\bv_a\rVert_{\bA_{v,t}^{-1}}^2 \leq d_v\log(\lambda + T/d_v)$ and $\alpha_{v,t} = R\sqrt{d_v \log \frac{1+t/\lambda}{\delta}} +\sqrt{\lambda}$ together.
\end{proof}

\begin{proof}[Proof of Theorem~\ref{theorem:comp}]
With probability at least $1-2\delta$, we have
\begin{align*}
\text{C}(T) &\leq \sum_{t=1}^T 4\textit{CB}_{v,t}(\bv_{a_t})\\
&\leq 4\sqrt{T \sum_{t=1}^T \textit{CB}^2_{v,t}(\bv_{a_t})}\\
&= 4\sqrt{T \sum_{t=1}^T \alpha_{v,t}^2 \lVert\bv_a\rVert_{\bA_{v,t}^{-1}}^2}\\
&\leq  4\alpha_{v,T}\sqrt{T\sum_{t=1}^T \lVert\bv_a\rVert_{\bA_{v,t}^{-1}}^2}\\
&\leq \left(4R\sqrt{d_v \log \frac{1+T/\lambda}{\delta}} +\sqrt{\lambda}\right)\sqrt{Td_v\log(\lambda + \frac{T}{d_v})}
\end{align*}

\end{proof}

\begin{proof}[Proof of Theorem~\ref{theorem:comp_lb_no_gap}]

We first prove that after a fixed time point, with high probability pulling arm $a$ once requires compensation at least $\Delta_{a}/3$. The proof idea is similar to the proof of Theorem 1 in \cite{Wang2018MultiarmedBW}. 
We then derive the asymptotic compensation lower bound.

Based on Lemma~\ref{lemma:minimum_exploration}, we can obtain
\begin{equation}\label{eq:minimum_exploration}
    \limsup_{T \to \infty}{\log(T) \Vert \bx_a \Vert^2_{G_{x,T}^{-1}} \leq \frac{\Delta_{a}^2}{2}} 
\end{equation}
which is also stated in the Corollary 2 in \cite{lattimore2017end}.

Let $N_a(T)$ be the  number of times arm $a$ is pulled in $T$ rounds.
Since the algorithm has $o(T)$ regret, we can find $T_1'(\delta)$ such that the best arm is pulled at least $T/2$ times with probability $1-\delta/2$. Using the concentration bound we know there exists $T_1''(\delta)$ such that for $t > T_1''(\delta)$ with probability $1-\delta/2$ the confidence interval of the best arm's reward estimation is smaller than $\Delta_{2}/3$ 
where $\Delta_{2}$ is the reward gap between the best arm and second best arm. Let $T_1(\delta) = \max(T_1'(\delta), T_1''(\delta))$ and for all $t>T_1(\delta)$, with probability $1-\delta$ we have $\hat r_{x,1,t} \geq \bE[r_1] - \Delta_2/3 $.

We argue a similar result for any suboptimal arm $a$. Based on Eq~\eqref{eq:minimum_exploration}, there exists a $T_a(\delta)$ such that for any $t > T_a(\delta)$, with probability $1-\delta$
\begin{equation*}
    \Vert\bx_a\Vert^2_{G_{x,t}^{-1}} \leq \frac{\Delta_{a}^2}{2\log(T)} \leq \frac{\Delta_{a}^2}{9f_{T,\delta}}
\end{equation*}
Combining with the concentration bound in Lemma~\ref{lemma:concentration} and we have for any $t > T_a(\delta)$ with probability $1-\delta$, $\hat r_{x,a,t} - \bE[r_a] \leq \Delta_a/3 $. 

Let $T(\delta) = \max_i T_i(\delta)$ and we know that for any $t>T(\delta)$, the minimum required compensation to incentivize the user to pull arm $a$ is
\begin{equation}
    \max_i\hat r_{x,i, t} - \hat r_{x, a, t} \geq
    \hat r_{x, 1, t} - \hat r_{x, a, t} \geq \bE[r_1] - \frac{\Delta_{2}}{3} - \bE[r_a] - \frac{\Delta_{a}}{3} \geq \frac{\Delta_{a}}{3}
\end{equation}
 with probability at least $1-\delta$.
 
We then use the optimization problem in Eq~\eqref{eq:optimization} to obtain the compensation lower bound, where the optimization minimizes the total compensation and satisfies the consistent constraints that the gaps of all suboptimal arms are identified with high confidence. With probability at least $1-\delta$, for sufficiently large $T$ the total compensation is
\begin{align*}
    C(T)\geq \sum_{a\in \mathcal{A}} \bE[N_a(T)]\frac{\Delta_a}{3}
\end{align*}
$\alpha_{\bx_a} = \bE[N_a(T)]/\log(T)$ is asymptotically feasible for large $T$ because it satisfies
\[
    \limsup_{T \to \infty}{ \Vert \bx_a \Vert^2_{H_{x,T}^{-1}}}=\limsup_{T \to \infty}{\log(T) \Vert \bx_a \Vert^2_{G_{x,T}^{-1}} \leq \frac{\Delta_{a}^2}{2}} \]
where $G_{x,T} = \log(T)H_{x,T}$.
Thus for any $\epsilon>0$,  $\Vert \bx_a \Vert^2_{H_{x,T}^{-1}} \leq \Delta_{a}^2/2 + \epsilon$ and
\begin{align}
    C(T)\geq \sum_{a\in \mathcal{A}} \bE[N_a(T)]\frac{\Delta_a}{3} \geq c_{x,\epsilon}(\mathcal{A}, \btheta^*)\log(T)
\end{align}
where $c_{x,\epsilon}(\mathcal{A}, \btheta^*)$ is the the optimal value of the optimization problem in Eq~\eqref{eq:optimization} by replacing $\Delta_{a}^2/2$ with $\Delta_{a}^2/2 + \epsilon$. Since $\inf_{\epsilon>0}c_{x,\epsilon}(\mathcal{A}, \btheta^*) = c_{x}(\mathcal{A}, \btheta^*)$ and $T\to\infty$ we have the total compensation as
\begin{equation*}
    \Omega\left(c_x(\mathcal{A}, \btheta^*)\log(T)\right)
\end{equation*}

\end{proof}

\end{document}